\documentclass[sigconf,nonacm]{acmart}
\usepackage{bm}
\usepackage{amsfonts}
\usepackage{graphicx}
\usepackage{textcomp}
\usepackage{xcolor}
\usepackage{url}
\usepackage[noend]{algpseudocode}
\usepackage{algorithm}
\usepackage{array}
\usepackage{booktabs}
\usepackage{amsthm}
\usepackage{multirow}
\usepackage{graphicx}
\usepackage{caption}
\usepackage{subcaption}
\usepackage{booktabs}
\usepackage{amsmath}
\usepackage{ulem}
\usepackage{pgf}
\usepackage{pgffor}
\usepackage{hyperref}
\usepackage{xspace}
\usepackage{setspace}
\usepackage[skip=0pt]{caption} % Adjust caption spacing
\usepackage{etoolbox}
\definecolor{green(html/cssgreen)}{rgb}{0.0, 0.5, 0.0}
\newcommand{\codecmt}[1]{{\it \small \color{green(html/cssgreen)} $\triangleright$ #1}}
\usepackage{booktabs} % 用于绘制高质量的横线 (\toprule, \midrule, \bottomrule)

\usepackage{amssymb}  % 用于绘制对勾 \checkmark 和 叉号 \cross
\usepackage{pifont}   % 另一种叉号的选择
\newcommand{\cmark}{\checkmark}
\newcommand{\xmark}{\ding{55}} % 需要 pifont 宏包，也可以直接用 \times

\usepackage{xcolor}
\usepackage{listings}
\usepackage[most]{tcolorbox}
\definecolor{myblue}{RGB}{0, 0, 200}
\definecolor{bgblue}{RGB}{245, 247, 255}
\definecolor{revisionpurple}{RGB}{0, 0, 0}

\usepackage{booktabs} % 提供 \toprule, \midrule, \bottomrule
\usepackage[table]{xcolor} % 提供表头底色功能

\usepackage{float}
\usepackage{enumitem}
\usepackage{lipsum}   % 示例文本生成

\setlist{nosep, leftmargin=*}
\newtheorem{theorem}{\textbf{Theorem}}
\newtheorem{lemma}{\textbf{Lemma}}

\newcommand{\ModelName}{\texttt{LUCID}\xspace}

\makeatletter
\patchcmd{\proof}
  {\topsep}
  {3pt \@plus 0.1ex \@minus 0.1ex}
  {}{}
\patchcmd{\endproof}
  {\topsep}
  {3pt \@plus 0.1ex \@minus 0.1ex}
  {}{}
\makeatother

\copyrightyear{2027}
\acmYear{2027}
\setcopyright{acmlicensed}
\acmConference[KDD '27]{Proceedings of the 33rd ACM SIGKDD Conference on Knowledge Discovery and Data Mining V.1}
{August 1--5, 2027}
{San Jose, CA, USA}
\begin{document}

    % \title{Effective and Explainable Unsupervised Community Detection with LLM-guided Rule Inductor}
    \title{Interpretable Unsupervised Community Detection with LLM-Symbolized Structured Processes}
    % Reasoning on Structure for Interpretable Unsupervised Community Detection
    % Explainable Unsupervised Community Detection with LLM-Induced Structural Decision Rules
    % Interpretable Unsupervised Community Detection with LLM-Symbolized Structured Processes
    \author{Aoting Zeng}
    \orcid{0009-0002-6590-2905}
    \affiliation{
    \institution{Shanghai Jiao Tong University} \city{Shanghai} \country{China} 
    %\school{School of Software Technology}   
    }
    \email{aotingzeng@sjtu.edu.cn} 
    
    \author{Kai Wang}
    \orcid{0000-0002-3123-2184}
    \affiliation{
    \institution{Shanghai Jiao Tong University} \city{Shanghai} \country{China} 
    }
    \email{w.kai@sjtu.edu.cn}
    
    \author{Jianwei Wang}
    \orcid{0009-0000-7887-4179}
    \affiliation{
    \institution{University of New South Wales} \city{Sydney} \country{Australia}  
    }
    \email{jianwei.wang1@unsw.edu.au}

    \author{Yuxiang Sun}
    \orcid{0009-0001-0848-7955}
    \affiliation{
    \institution{Shanghai Jiao Tong University} \city{Shanghai} \country{China} 
    }
    \email{scscsc04@sjtu.edu.cn}
    
    \author{Yizhang He}
    \orcid{0000-0002-4426-7503}
     \affiliation{
    \institution{Shanghai Jiao Tong University} \city{Shanghai} \country{China} 
    }
    \email{yizhang.he@sjtu.edu.cn}

    \author{Wenjie Zhang}
    \orcid{0000-0001-6572-2600}
     \affiliation{
    \institution{University of New South Wales} \city{Sydney} \country{Australia} 
    }
    \email{wenjie.zhang@unsw.edu.au}
    \renewcommand{\shortauthors}{Aoting Zeng, Kai Wang, Jianwei Wang, Yuxiang Sun, Yizhang He, Wenjie Zhang}

\begin{abstract}
    Community detection is a fundamental task in graph analytics that aims to identify cohesive groups of entities with similar behaviors or interests. 
Classic objective-driven methods struggle with complex graph structures, while deep-learning approaches improve performance at the expense of interpretability and their reliance on labeled data and training. 
Large language models (LLMs), with strong reasoning and world knowledge, are promising for interpretable, label-free community detection. 
To leverage these strengths, we propose \ModelName, an \textbf{\underline{L}}LM-guided interpretable, training-free and \textbf{\underline{U}}nsupervised \textbf{\underline{C}}ommun\textbf{\underline{I}}ty \textbf{\underline{D}}etection method. Inspired by phase-transition kinetics in natural systems, where complex structures emerge through initialization, merging, refinement, and selection, \ModelName is designed as a four-stage pipeline. 
Within this pipeline, the LLM induce formal rules that translate implicit knowledge into explicit and interpretable logical structures. 
 Specifically, 
(1) the \textit{Local-View Community Initialization} stage encodes local graph structure with $k$-ego contexts and unsupervised node roles; 
(2) the \textit{Multi-factor Community Merge} stage uses LLM-induced rules to iteratively merge local communities;
(3) the \textit{Multi-grain Community Refinement} stage applies LLM-induced coarse-to-fine rules in parallel to reduce boundary noise; and 
(4) the \textit{Global-view Community Selection} stage identifies high-quality communities with topological compactness and boundary clarity. 
Extensive experiments on real-world datasets demonstrate that \ModelName, as an unsupervised approach, achieves state-of-the-art performance and consistently outperforms leading unsupervised and semi-supervised baselines. 
    \end{abstract}
    \keywords{Community Detection; Graph Data Mining; Large Language Models}
    \begin{CCSXML}
<ccs2012>
   <concept>
       <concept_id>10003752.10003809.10003635.10010038</concept_id>
       <concept_desc>Theory of computation~Dynamic graph algorithms</concept_desc>
       <concept_significance>500</concept_significance>
       </concept>
 </ccs2012>
\end{CCSXML}

% \ccsdesc[500]{Information systems → Data mining.}

    \maketitle
    \newcommand\kddavailabilityurl{https://doi.org/10.5281/zenodo.15531996}
\label{code}
\ifdefempty{\kddavailabilityurl}{}{
\begingroup\small\noindent\raggedright\textbf{KDD Availability Link:}\\
The source code of this paper is available at \url{https://anonymous.4open.science/r/KDD2027LUCID-0F70}.
\endgroup
}

    \section{Introduction}
    \label{sec.intro}
    
%% graph important and CD is important. 
Graphs serve as powerful tools for modeling complex relationships and have been widely applied in domains including financial systems \cite{xiang2023semi,ma2023fighting}, social networks \cite{backstrom2006group,wu2025gas,meng2024interpretable}, and biomedicine \cite{Krogan_Cagney,xu2025survey}. 
In graph analytics, community detection \cite{fortunato2010community,newman2004finding} is a fundamental task that identifies densely connected groups of entities sharing similar behaviors or interests, which provides essential insights for downstream analysis and strategic decision-making. 
% Uncovering these structures provides essential insights for downstream analysis and strategic decision-making. 

%% traditional methods not flexible to capture complex patterns. recent shifts to learning methods for  flexibility 
\textcolor{black}{Classic objective-based community detection methods \cite{newman2004finding,blondel2008fast,traag2019louvain} often struggle to capture complex community structures in real-world graphs due to limited flexibility.} 
More recently, a growing body of work \cite{procd,come,seal,clare,procom} has shifted toward deep-learning–based approaches, which leverage the capabilities of neural networks to capture complex structural patterns in graphs, as shown in Table~\ref{tab:comparison-ed}. 
For instance, ComE \cite{come} exploits node embeddings to encode community-related features, whereas SEAL \cite{seal} employs generative adversarial networks to infer community structures. CLARE \cite{clare} introduces a deep reinforcement learning framework with locator–rewriter mechanisms, and PROCOM \cite{procom} integrates pre-training with prompt-based strategies to improve the performance.

% The deep–learning–based approaches~\cite{come,seal,clare,procom} achieve state-of-the-art results on benchmark datasets, but they still face these fundamental limitations in practice: 
% (1) they rely heavily on extensive supervised training, requiring large amounts of labeled data that are costly to obtain and often unavailable in the real-world; 
\textcolor{revisionpurple}{Deep-learning-based community detection methods operate under both unsupervised and semi-supervised settings. Despite their strong performance, they share two practical limitations:}
\textcolor{revisionpurple}{(1) their decision-making processes are generally opaque, making it difficult to interpret why specific nodes are grouped together; and}
\textcolor{revisionpurple}{(2) they often require dataset-specific training or adaptation when applied to new graphs.}
\textcolor{revisionpurple}{In addition, semi-supervised methods such as SEAL, CLARE, and PROCOM face a further limitation: they rely on labeled communities for training or prompting, while such annotations are costly to obtain and often unavailable in real-world applications.}

\begin{table*}[htbp]
\centering
\caption{Comparisons of community detection methods.}
\label{tab:comparison-ed}
\begin{tabular}{>{\mdseries}l>{\mdseries}c>{\mdseries}c>{\mdseries}c>{\mdseries}c>{\mdseries}c}
\toprule
\textbf{Approach} & \textcolor{revisionpurple}{\textbf{Overlap}} & \textbf{Label Free?} & \textcolor{revisionpurple}{\textbf{Detection Mode}} & \textbf{Backbone} & \textbf{Methodology} \\
\midrule
Louvain \cite{traag2019louvain} & \textcolor{revisionpurple}{\xmark} & \cmark & \textcolor{revisionpurple}{Global} & Modularity Optimization & Classic objective-based \\
\textcolor{revisionpurple}{PRoCD \cite{procd}} & \textcolor{revisionpurple}{\xmark} & \textcolor{revisionpurple}{\cmark} & \textcolor{revisionpurple}{Global} & \textcolor{revisionpurple}{Pre-train \& Refine} & \textcolor{revisionpurple}{Deep-learning-based} \\
BigClam \cite{bigclam} & \textcolor{revisionpurple}{\cmark} & \cmark & \textcolor{revisionpurple}{Global} & Matrix Factorization & Classic objective-based \\
ComE \cite{come}& \textcolor{revisionpurple}{\cmark} & \cmark & \textcolor{revisionpurple}{Global} & Node Embeddings & Deep-learning-based \\
CommunityGAN \cite{communitygan}& \textcolor{revisionpurple}{\cmark} & \cmark & \textcolor{revisionpurple}{Global} & GAN + Motifs & Deep-learning-based \\
Bespoke \cite{bespoke}& \textcolor{revisionpurple}{\cmark} & \xmark & \textcolor{revisionpurple}{Local} & Structural Fingerprints & Classic objective-based \\
SEAL \cite{seal}& \textcolor{revisionpurple}{\cmark} & \xmark & \textcolor{revisionpurple}{Local} & GAN & Deep-learning-based \\
CLARE \cite{clare}& \textcolor{revisionpurple}{\cmark} & \xmark & \textcolor{revisionpurple}{Local} & DRL (Locator-Rewriter) & Deep-learning-based \\
PROCOM \cite{procom}& \textcolor{revisionpurple}{\cmark} & \xmark & \textcolor{revisionpurple}{Local} & Pre-train \& Prompt & Deep-learning-based \\
\midrule
\ModelName (ours) & \textcolor{revisionpurple}{\cmark} & \cmark & \textcolor{revisionpurple}{Global} & LLM-Symbolized Pipeline & Symbolic rule-based \\
\bottomrule
\end{tabular}
\vspace{-8pt}
\end{table*}

\noindent\textbf{Motivations}.
% The above discussion motivates us to explore alternative paradigms for community detection that are interpretable, efficient, and label-free. 
%The above discussion motivates us to seek alternative community detection paradigms that do not rely on labeled data, are interpretable, and training-free. 
Motivated by the above discussions, we aim to explore new community detection paradigms that can identify community structures without heavy supervision, while providing interpretable insights into why specific communities are formed.
% Interpretability means a transparent decision process from inputs to outputs.
% Here, interpretable means understanding how inputs lead to outputs.
% Interpretability is the ability to understand how inputs lead to outputs.
In recent years, large language models (LLMs) have demonstrated strong reasoning capabilities and the ability to transfer knowledge without task-specific training \cite{zhao2023survey,naveed2025comprehensive}, making them a natural candidate for alleviating the interpretability and label-dependency limitations of existing approaches. 
%In particular, LLMs are good at understanding complex relationships, reasoning over structural patterns, and transferring knowledge across domains without task-specific training, which aligns well with the goals of interpretable and label-free community detection. 
However, directly applying LLMs to predict communities is impractical. Approaches that input the full graph or iterate over nodes suffer from context constraints and low efficiency.
% \textcolor{purple}{However, directly applying LLMs to determine node memberships remains impractical for large-scale graphs due to limited context windows, high inference costs, and low efficiency. }
How to leverage LLMs' reasoning capabilities while alleviating these issues has become a key challenge in introducing them into the community detection task. 
% In recent years, Large Language Models (LLMs) have demonstrated strong reasoning capabilities and rich world knowledge, making them a promising candidate for alleviating the interpretability and label-dependency limitations of existing approaches. In particular, LLMs excel at understanding complex relationships, reasoning over structural patterns, and transferring knowledge across domains without task-specific supervision, which aligns well with the goals of interpretable and label-efficient community detection. However, directly applying LLMs to determine node memberships remains impractical for large-scale graphs due to limited context windows, high inference costs, and low efficiency. Moreover, such end-to-end LLM inference still offers limited transparency in how decisions are made. Consequently, a key challenge lies in how to leverage LLMs’ reasoning strengths while avoiding these practical bottlenecks when introducing them into community detection.

\noindent\textbf{Our approaches}.
Motivated by the challenges and to address the above limitations, we propose \ModelName, an \textbf{\underline{L}}LM-guided interpretable, training-free and \textbf{\underline{U}}nsupervised \textbf{\underline{C}}ommun\textbf{\underline{I}}ty \textbf{\underline{D}}etection method. 
% \textcolor{black}{The design of LUCID is inspired by phase-transition kinetics in natural systems \cite{barish2009information}, where complex structures emerge through sequential processes of initialization, merging, refinement, and selection.}
% Following this natural progression, \ModelName adopts a similar four-stage pipeline to progressively construct communities from local structures to global view.
\textcolor{revisionpurple}{As a result, \ModelName incurs no training overhead and requires no local GPU resources, facilitating deployment in resource-constrained settings.}
% \textcolor{revisionpurple}{Its training-free design avoids task-specific model training and reduces dependence on GPU resources, making \ModelName suitable for resource-constrained settings.}
\textcolor{revisionpurple}{The design of LUCID is inspired by phase-transition kinetics in natural systems \cite{barish2009information,Li2025Nucleation}, which provides a high-level evolutionary view of how complex structures emerge through successive stages.}
\textcolor{revisionpurple}{Following this view, \ModelName frames community formation as a four-phase process of initialization, merging, refinement, and selection, progressively constructing communities from local structures to a global view.}
\textcolor{black}{In particular, \ModelName leverages the LLM in the merge and refinement stages to make critical structural decisions. Rather than using the LLM as an end-to-end predictor, \ModelName employs it as an automatic rule inducer, which converts implicit model knowledge into explicit, interpretable rules. 
In this way, we combine the reasoning capabilities of the LLM with the structure of algorithmic graph processing. }

Specifically, (1) in the local-view community initialization stage, \ModelName transforms the graph into localized $k$-ego representations and introduces unsupervised node role labels to help the LLM understand local structural semantics. 
\textcolor{black}{(2) In the multi-factor community merge stage, the LLM first derives a multi-factor decision tree for evaluating structural compatibility between communities. 
This decision tree is then applied to guide merge decisions between neighboring communities. To improve both performance and scalability, we prioritize community backbones by sorting center nodes according to their ego-network density, ensuring high-coverage merges of core communities. We also select neighbors with high Jaccard similarity for each center node, restricting merge candidates to structurally relevant nodes while respecting context constraints.}
% \textcolor{purple}{(2)In the multi-factor community merge stage, the LLM derives a multi-factor decision tree for evaluating structural compatibility between communities. 
% To improve both performance and scalability, we employ order-aware center execution that prioritizes structurally rich regions for high-coverage merges, and budget-guided neighbor sampling that selects top-$K$ structurally relevant neighbors to reduce noise while respecting context constraints. 
(3) In the multi-grain community refinement stage, \ModelName guides the LLM to induce coarse-to-fine rules for refining node memberships in the community and removing boundary noise. 
These rules are applied progressively to refine nodes within each candidate community, and executed in parallel to enable concurrent processing across communities and batch refinement within each community.
(4) In the global-view community selection stage, we introduce a structural quality metric that evaluates topological compactness and boundary clarity to select high-quality communities in an interpretable manner.
% we propose a structural quality metric evaluating topological compactness and boundary clarity to select high-quality communities.
% In the multi-factor community merge stage, the LLM derives a multi-factor decision tree to evaluate structural compatibility between communities. To enhance scalability and merge quality, we employ order-aware center execution, which prioritizes structurally rich regions, and budget-guided neighbor sampling, which selects the top-$K$ relevant neighbors to reduce noise while respecting context constraints. In the multi-grain community refinement stage, LUCID guides the LLM to generate coarse-to-fine rules and executes them in parallel, leveraging the independence of node-level decisions to enable simultaneous refinement of multiple communities and batch processing within each community, significantly improving efficiency. Finally, in the global-view community selection stage, we introduce a structural quality metric that evaluates topological compactness and boundary clarity to select high-quality communities in an interpretable manner.

\noindent\textbf{Contributions}. Here, we summarize the \textcolor{revisionpurple}{principal} contributions.
% The contributions of this paper are summarized as follows. 
% \vspace{4pt}
\begin{itemize}
    \item We propose \ModelName, a symbolic rule–based community detection framework that is interpretable, training-free and label-free.

    \item \textcolor{revisionpurple}{We formulate community detection as a four-stage process of initialization, merging, refinement, and selection, together with stage-specific strategies for scalable and robust graph processing.}

    \item We propose a new way to leverage \textcolor{revisionpurple}{the LLM} as rule inducers, distilling implicit model knowledge into explicit, interpretable logic to drive decision-making within the four-stage algorithmic pipeline.

    \item Extensive experiments on diverse real-world datasets \textcolor{revisionpurple}{demonstrate state-of-the-art performance under unsupervised settings, while comprehensive analyses validate the interpretability and effectiveness of the proposed components.}
\end{itemize}
    \vspace{-3pt}
    \section{Related Work}
    \label{sec.related}
    \noindent
{\bf Community detection.}
Community detection \cite{fortunato2010community} partitions graphs into cohesive modules, revealing structures in social \cite{social, wang2024neural, transzero}, biological \cite{bio1, bio2}, and information systems \cite{info, enmcs}. Early studies fall into three categories. 
(1) Optimization-based methods \cite{newman2004finding, blondel2008fast, traag2019louvain} maximize metrics like modularity. 
(2) Matrix factorization methods \cite{bigclam, wang2016semantic} decompose adjacency matrices into latent representations.  
(3) Generative methods \cite{karrer2011stochastic, communitygan} assume edges are sampled from distributions. 
% Although conceptually clear, these methods rely on fixed, shallow objectives, restricting their adaptability to real-world scenarios. 
% Despite their simplicity, these methods rely on fixed objectives and shallow structural assumptions, limiting adaptability to complex real-world graphs. 
However, these methods rely on fixed objectives and often fail to adapt to real-world graphs. 
Recent learning-based methods combine representation learning with community detection.  SDCN \cite{bo2020sdcn} and DMoN \cite{tsitsulin2020dmon} jointly optimize embeddings and partitions, while CGC \cite{park2022cgc} captures patterns via structural contrasts. For targeted identification, seed-centric methods like Bespoke \cite{bakshi2018bespoke} and SEAL \cite{seal} expand from target nodes, CLARE \cite{clare} uses subgraph inference, while PROCOM \cite{procom} reduces annotation costs via pretrain-prompting. However, these methods rely on labeled data and lack transparency and interpretability.

\noindent
{\bf LM-based graph analytics.}
Existing LM-based graph analytics methods fall into four categories. 
Cascading GNN-LM models \cite{yang2021graphformers,zhao2022learning} sequentially combine GNNs and LMs to fuse structural and textual information, but incur high computational cost and lack interpretability. 
Self-supervised GNN-LM approaches \cite{chien2021node,mavromatis2023train} leverage graph-based tasks to adapt LMs to graph data, but they remain focused on node-level representation learning and provide limited interpretability of community structures. 
LLMs for graph methods \cite{guo2023gpt4graph,yuan2023evaluating,tian2024graph} transform graphs into textual descriptions for LLMs, enabling semantic reasoning but failing to explicitly model intrinsic graph structures and community boundaries. 
GraphAdapter-style methods \cite{graphadapter} attach GNN adapters to frozen PLMs and achieve strong performance on text-attributed graphs, but still rely on implicit supervision and offer limited transparency.
%%%%%% from here on talk about LUCID
%Unlike existing LLM-based graph methods that rely on embeddings or text reasoning, \ModelName uses LLMs as rule inductors to generate explicit, executable structural logic for unsupervised community detection.
% Unlike existing LLM-based graph approaches that primarily operate through implicit embeddings or text-based reasoning, \ModelName leverages LLMs as rule induction engines to generate explicit, executable structural decision logic for fully unsupervised community detection. 

\iffalse
\noindent
{\bf LLM-based graph analytics.}
Existing methods generally fall into four categories. 
Cascading GNN-LM models \cite{yang2021graphformers,zhao2022learning} sequentially fuse structural and textual information but incur high computational costs and lack interpretability. 
Self-supervised GNN-LM approaches \cite{chien2021node,mavromatis2023train} adapt LMs via graph tasks yet remain focused on node-level representations, offering limited interpretability of community structures. 
LLMs for graph methods \cite{guo2023gpt4graph,yuan2023evaluating,tian2024graph} transform graphs into text for semantic reasoning but fail to explicitly model intrinsic structures or boundaries. 
GraphAdapter-style methods \cite{graphadapter} attach GNN adapters to frozen PLMs, achieving strong performance on text-attributed graphs, but still rely on implicit supervision and offer limited transparency. 
Unlike these approaches, \ModelName employs LLMs as rule inductors to generate explicit, executable structural logic for unsupervised community detection.
\fi

    \section{Problem Definition}
    \label{sec.preliminary}
    In this section, we present the problem definition and frequently used notations throughout this paper.

\noindent
\textbf{Unsupervised global community detection.}
\textcolor{revisionpurple}{In this paper, we study unsupervised global overlapping community detection on an undirected graph $G=(V,E)$, where $V$ and $E$ denote the node set and edge set, respectively. The goal is to infer a set of potential communities $\mathcal{C}=\{C^{(i)}\}_{i=1}^{N}$ over the entire graph without queries, where each $C_i \subseteq V$, and different communities may overlap. $N$ denotes the number of communities produced by \ModelName. In contrast, $M$ denotes the number of ground-truth communities and is used only for evaluation. Notations are summarized in Table \ref{tab: notations}.}

\noindent
\textcolor{revisionpurple}{\textbf{Remark (Problem scope).}
Global community detection aims to recover all potential communities across the entire graph, whereas local community detection~\cite{bespoke,seal,clare,procom} starts from given queries and retrieves specific target communities.
Overlapping community detection~\cite{bigclam,come,communitygan,bespoke,seal,clare,procom} addresses another need by allowing each node to belong to multiple communities. 
We study the global overlapping setting because our goal is to discover all such communities without queries.
% Overlapping means that nodes can belong to multiple communities.
% Here, we study the global overlapping setting.
}

    \section{The \ModelName Framework}
    \label{sec.method}
    In this section, we present the \ModelName framework. We first provide an overview, then detail its four operational stages, and conclude with a complexity analysis.

\subsection{Framework Overview}
\label{sec: framework overview}
% We propose \ModelName, a four-stage LLM-symbolized framework for interpretable community detection, as illustrated in Figure~\ref{fig: framework}. The framework constructs communities through four stages: local-view community initialization, multi-factor community merge, multi-grain community refinement, and global-view community selection. Rather than treating LLMs as end-to-end predictors, \ModelName employs them as rule inducers in the merge and refinement stages. In the merge stage, the LLM induces a multi-factor decision tree to guide subgraph merging decisions; in the refinement stage, the LLM generates coarse-to-fine rules to remove boundary noise, converting implicit model knowledge into explicit, interpretable rules.
We propose \ModelName, a four-stage LLM-symbolized framework for interpretable community detection, as illustrated in Figure~\ref{fig: framework}. The framework constructs communities through four stages: local-view community initialization, multi-factor community merge, multi-grain community refinement, and global-view community selection. \textcolor{revisionpurple}{Rather than treating the LLM as end-to-end predictors, \ModelName employs them as rule inducers in the merge and refinement stage. Guided by our prompts, the LLM observes structural patterns from sample communities and draws on its implicit understanding of community-detection objectives to induce explicit, executable symbolic rules: a multi-factor decision tree for merging compatible subgraphs and coarse-to-fine rules for removing boundary noise.}

\begin{figure}[t!]
    \centering
    \includegraphics[width=1.0\columnwidth]{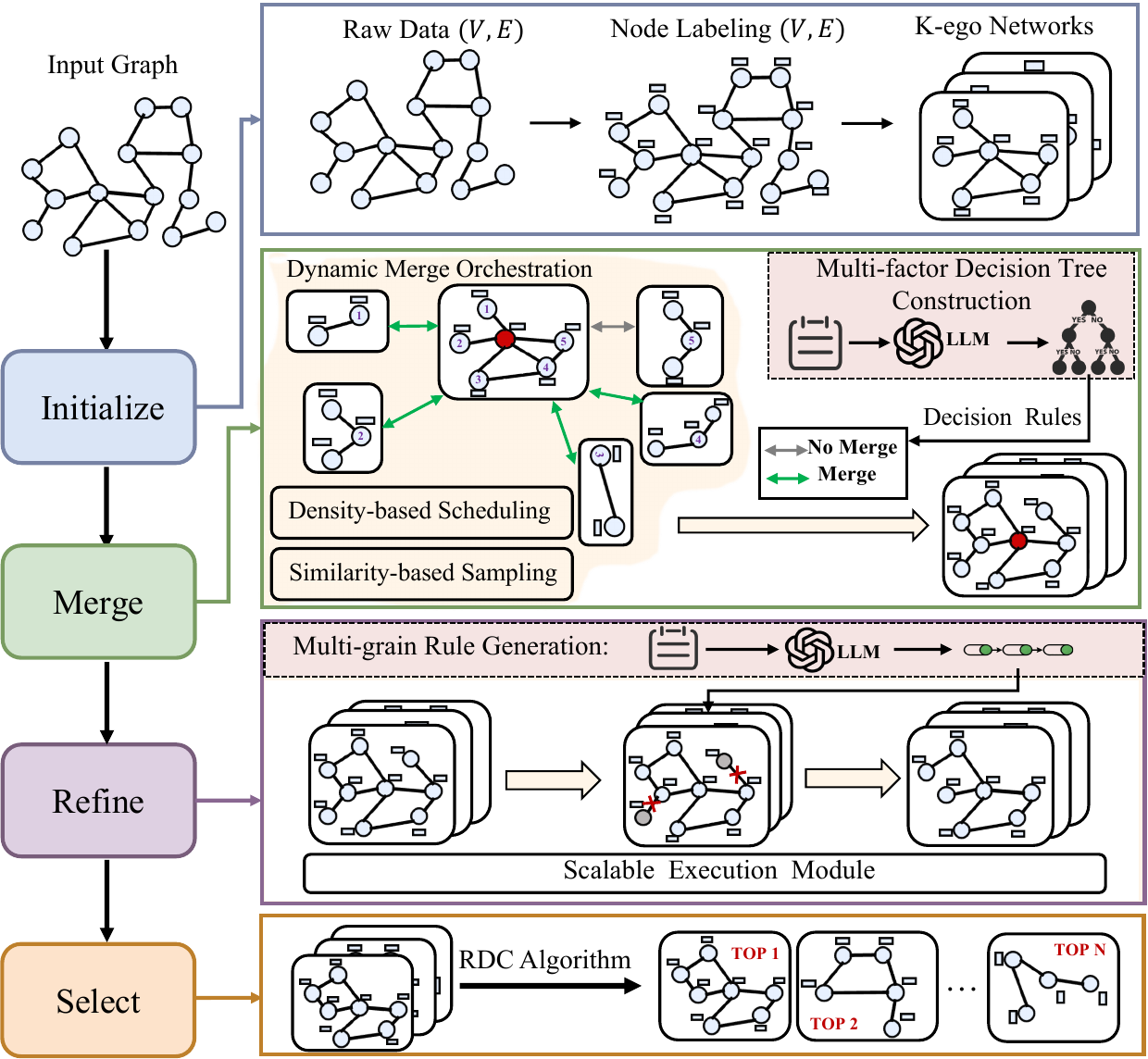}
    \caption{The framework of \ModelName: (1) The initialization stage assigns role labels to nodes and decomposes the graph into localized $k$-ego representations; (2) The merge stage leverages LLM-driven multi-factor reasoning to merge subgraphs into community candidates; (3) The refinement stage refines candidates through LLM-guided coarse-to-fine rule generation with a scalable execution module; (4) Finally, the selection stage selects final communities by balancing internal cohesion and external separation. Our LLM-symbolized framework enables interpretable community detection in a label-free manner.}
    % \caption{The framework of \ModelName. In the local-view community initialization stage, we assign unsupervised role labels to nodes and decompose the original graph into localized $k$-ego. In the multi-factor community merge stage, we leverage LLM-symbolized multi-factor reasoning to progressively merge small subgraphs into more complete community candidates. In the subsequent multi-grain community refinement stage, we further refine candidate communities through coarse-to-fine adjustments to remove boundary noise. Finally, in the global-view community selection stage, we select the final communities by balancing internal cohesion and external separation from a global structural perspective.}
    \label{fig: framework}
\end{figure}

\begin{figure*}[t!]
    \centering
    % 使用 \columnwidth 确保图片宽度不会超过单栏范围
    \includegraphics[width=1.0\textwidth]{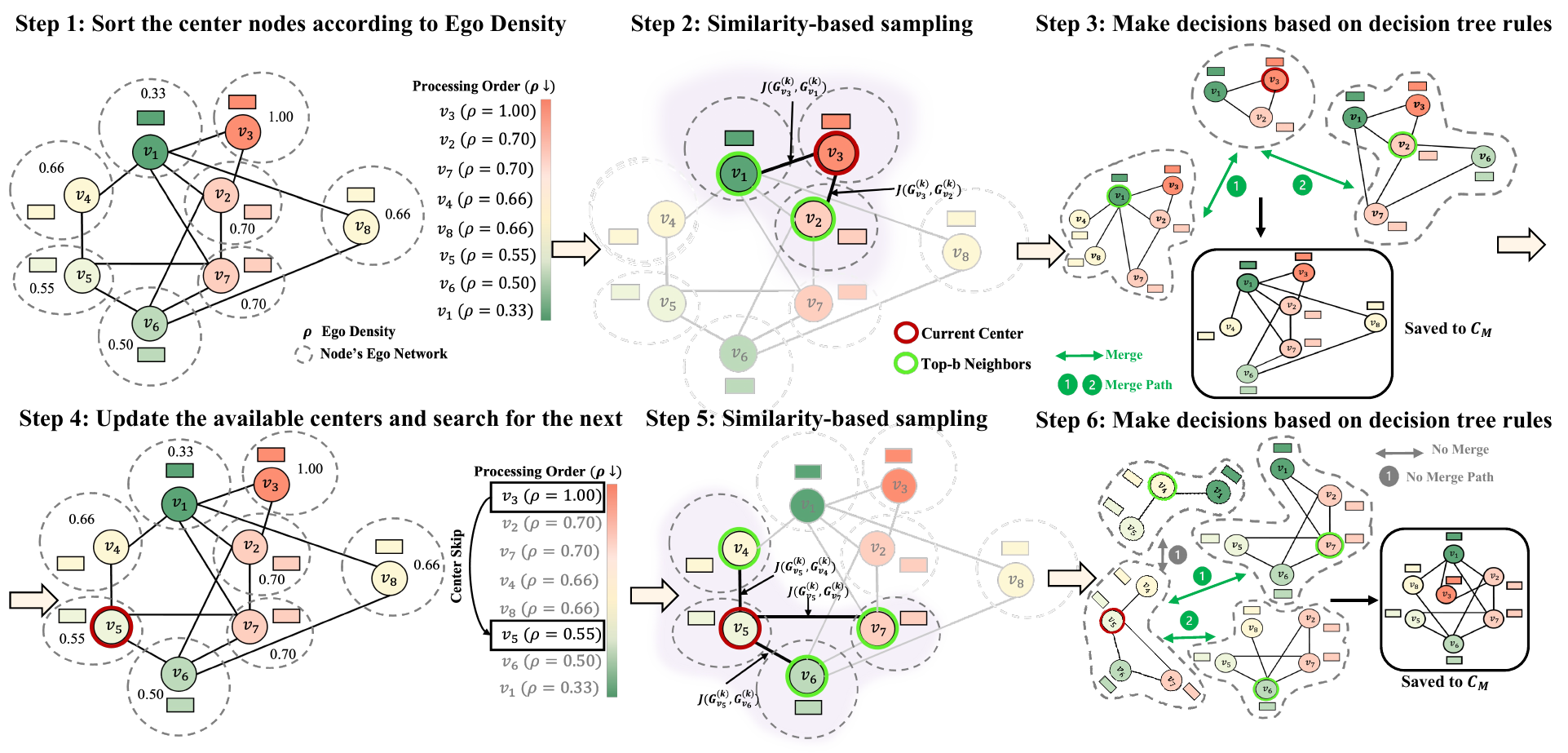}
    \caption{Illustrating the Merge Stage in \ModelName. The process iteratively merges subgraphs through six steps: (1) sorting centers by ego density, (2) sampling top-$b$ neighbors for the current center, (3) applying LLM-generated decision tree rules for merge decisions, (4) removing merged neighbors and their $k$-ego nodes from center candidates, (5) sampling neighbors for the next center, and (6) applying decision rules again. The process continues until all centers are processed, accumulating communities in $\mathcal{C}_M$. This dynamic orchestration enables effective and efficient merging at scale.}
    % \caption{Illustrating the Merge Stage in \ModelName. The process iteratively merges subgraphs through six steps: (1) sorting center nodes by ego density in descending order, (2) performing budget-guided neighbor sampling to select top-$b$ neighbors for the current center, (3) making merge decisions using LLM-generated decision tree rules, (4) updating the available center set by removing merged neighbors and all nodes within their $k$-ego networks from the center candidate set, (5) repeating budget-guided neighbor sampling for the next center, and (6) applying decision tree rules again to determine merge outcomes. The process continues until all eligible centers are processed, accumulating merged communities in $\mathcal{C}_M$. This dynamic orchestration mechanism enables effective and efficient merging at scale.}
    \label{fig:merge stage}
    \vspace{-5pt}
\end{figure*}

% \subsection{Local-view Community Initialization}
% \label{sec: preprocessing}To enable scalable processing by the LLM, we perform a $k$-ego network decomposition. This decomposition addresses two fundamental requirements. First, LLMs have limited context windows, making it infeasible to process large-scale graphs directly. By decomposing the graph into smaller, localized subgraphs, each subgraph can effectively fit within the context window of the LLM, which helps mitigate context constraints and enhances scalability. Second,
\subsection{Local-view Community Initialization}
\label{sec: preprocessing}
% {\color{red}TODO - In the Community Initialization stage, we  genereate initial communities, label nodes. For initial community generation,  we choose $k$-ego networks as the decomposition unit because they serve as natural building blocks for communities.
% Each ego network captures the local neighborhood structure centered on a node, which inherently represents a potential community seed. Each ego network $G_u^{(k)}$ is centered on node $u$ and includes all nodes within $k$ hops. This decomposition partitions the global graph into localized subgraphs ${G_u^{(k)}}_{u \in V}$, allowing LLM to process structural information in manageable units while preserving local connectivity.}
In the local-view community initialization stage, we generate initial communities and assign unsupervised role labels to nodes. 
% before revision
% To generate initial communities, we decompose the graph into $k$-ego networks. This addresses LLM context window limitations by decomposing the graph into smaller, localized subgraphs that fit within the context window. We choose $k$-ego networks as the decomposition unit because they serve as natural building blocks for communities. Each ego network $G_u^{(k)}$ is centered on node $u$ and includes all nodes within $k$ hops, partitioning the global graph into localized subgraphs ${G_u^{(k)}}_{u \in V}$ that preserve local connectivity.
\textcolor{revisionpurple}{To generate initial communities, we decompose the graph into $k$-ego networks. Every node serves as a center, yielding $|V|$ overlapping $k$-ego subgraphs $\{G_u^{(k)}\}_{u\in V}$, where each $G_u^{(k)}$ contains node $u$ and all nodes within $k$ hops. This decomposition addresses LLM context window limitations by converting the full graph into smaller structural units, while preserving local connectivity around each center. We choose $k$-ego networks as the decomposition unit because they serve as natural building blocks for communities. }

To further enhance the LLM's understanding of graph structure, we assign each node an unsupervised role label~\cite{bespoke} as an auxiliary structural signal. These labels summarize the connectivity of each node within its neighborhood, providing richer information than raw edges alone. For each node $u \in V$, we compute the Jaccard similarity between its neighborhood and those of its neighbors $v \in \mathcal{N}(u)$. The resulting set of similarity values $\mathcal{S}(u)$ is summarized using five percentiles $(Q_0, Q_{25}, Q_{50}, Q_{75}, Q_{100})$ to form a fixed-length feature vector $\mathbf{x}_u$, capturing the local connectivity structure of $u$. 
We then apply K-means clustering with $K=4$ to $\{\mathbf{x}_u\}$ to assign discrete node role labels. Each label reflects the local neighborhood structure of the corresponding node. Formally, the graph is thus represented as an attributed graph $G=(V, E)$, where each node $u \in V$ is associated with a label $l_u$ representing its identified role.
% \vspace{-15pt}
\subsection{Multi-factor Community Merge}
\label{sec: merge}
After initialization, the graph is decomposed into local subgraphs. The merge stage identifies opportunities to merge initial communities, but determining merge decisions requires considering multiple structural factors. To address this, the merge stage is organized around two key components:
(i) an LLM-guided multi-factor decision tree that determines whether candidate subgraphs should be merged, and
(ii) a dynamic merging orchestration that governs how merge decisions are executed through similarity-based sampling and density-based scheduling. The illustration of the merge stage process is shown in Figure~\ref{fig:merge stage}.
\subsubsection{Multi-factor Decision Tree Construction}
We design a structured prompt that guides the LLM to generate a multi-factor decision tree. The prompt consists of three key components: data context, decision tree definition, and output requirements. An example is given in Box \ref{box:prompt_spec}.

\noindent
{\bf Data context.}
We input a \textit{data profile} that provides dataset-level statistical metadata, together with a \textit{sample block} centered on one node, which explicitly describes its $k$-ego network and the $k$-ego networks of its direct neighbors. Formally, the input context is defined as $\mathcal{I} = \{\mathcal{I}_P, \mathcal{I}_B\}$, where $\mathcal{I}_P$ denotes the data profile, and $\mathcal{I}_B = \bigl(v_c, G_{v_c}^{(k)}, \{G_{n_j}^{(k)}\}_{j=1}^{z}\bigr)$ is a sample block. Here, $v_c$ is the center vertex, and $G_{v_c}^{(k)}$ is the $k$-ego network centered at $v_c$. $\{G_{n_j}^{(k)}\}_{j=1}^{z}$ denotes the $k$-ego networks of $z$ direct neighbors of $v_c$.

\noindent
{\bf Decision tree definition.}
We define a decision tree that hierarchically evaluates multiple structural factors (e.g., overlap, density, and labels) in a symbolic rule-based manner:
\[
\mathcal{T} = (\mathcal{U}, \mathcal{E}, r, \{\phi_n\}_{n \in \mathcal{U}_{\text{int}}}, \{\psi_\ell\}_{\ell \in \mathcal{U}_{\text{leaf}}}),
\]
% where $\mathcal{U}$ and $\mathcal{E}$ denote the sets of nodes and directed edges in the decision tree, $r$ is the root node, and $\mathcal{U}_{\text{int}}$ and $\mathcal{U}_{\text{leaf}}$ represent internal and leaf nodes, respectively. Each internal node $n \in \mathcal{U}_{\text{int}}$ is associated with a structural predicate $\phi_n: \mathcal{R}(v, u) \times \mathcal{X} \rightarrow \{0,1\}$, where $\mathcal{R}(v, u)$ denotes the structural context of a center--neighbor pair, and $\mathcal{X}$ represents the current global merge state. Each leaf node corresponds to a decision action $\psi_\ell \in \{\textsc{Merge}, \textsc{No Merge}\}$. Executing the decision tree amounts to traversing a root-to-leaf path conditioned on both local structure and global state, yielding a deterministic merge decision for each eligible center--neighbor pair.
where $\mathcal{U}$ and $\mathcal{E}$ denote the sets of nodes and directed edges in the decision tree, $r$ is the root node, and $\mathcal{U}{\text{int}}$ and $\mathcal{U}{\text{leaf}}$ represent internal and leaf nodes, respectively. Each internal node $n \in \mathcal{U}{\text{int}}$ is associated with a structural predicate $\phi_n: \mathcal{R}(v, u) \times \mathcal{X} \rightarrow {0,1}$, where $\mathcal{R}(v, u)$ denotes the structural context of a center--neighbor pair, and $\mathcal{X}$ represents the global merge state. Each leaf node corresponds to a decision action $\psi\ell \in {\textsc{Merge}, \textsc{No Merge}}$. Executing the decision tree amounts to traversing a root-to-leaf path conditioned on both local structure and global state, yielding a merge decision for each eligible center--neighbor pair.

\noindent
{\bf Output requirements.}
The LLM generates JSON-formatted decision rules which are then compiled into executable Python predicates $\phi_n$. 
% For example, for center node $v_1$ with neighbors $\{v_2, v_3\}$, the LLM generates:
% \begin{verbatim}
% {
%   "node_id": "overlap_check",
%   "code": "return jaccard(ego(v), ego(u)) >= 0.65",
%   "true_child": "density_check",
%   "false_child": "reject"
% }
% \end{verbatim}
The structured output is designed to be machine-interpretable, including an executable decision tree and corresponding decision traces for each sample. A case study of the decision is illustrated in Figure~\ref{fig:decision_tree} in Section~\ref{sec:case study}.

\subsubsection{Dynamic Merge Orchestration}
Static multi-factor decision trees are insufficient for large-scale community merging. Instead, effective merging requires dynamic orchestration to align local decisions with global consistency under sequential dependencies. 
To this end, we design two optimizations: (i) similarity-based sampling to restrict merge candidates to structurally relevant nodes under context constraints, and (ii) density-based scheduling to prioritize merges in dense regions and regulate execution order.

\noindent
{\bf Similarity-based sampling.}
We restrict merge candidates to structurally relevant neighbors using Jaccard similarity:
\[
J(v,u) = \frac{|V_{\text{ego}}(v) \cap V_{\text{ego}}(u)|}{|V_{\text{ego}}(v) \cup V_{\text{ego}}(u)|}.
\]
During decision tree construction, we select the top-$z$ neighbors to form the sample block $\mathcal{I}_B$; during merging, for each center node $v$, we select the top-$b$ neighbors $\mathcal{N}_v^{(b)}$. This restricts merge decisions to relevant neighbors and maintains prompt length within the LLM's context window.
% \noindent
% {\bf Similarity-based sampling.}
% For a center node $v$, we only consider neighbors $u$ whose $k$-ego graphs are similar enough, measured by the Jaccard similarity:
% \[
% J(v,u) = \frac{|V_{\text{ego}}(v) \cap V_{\text{ego}}(u)|}{|V_{\text{ego}}(v) \cup V_{\text{ego}}(u)|}.
% \]
% similarity-based sampling operates in two stages: (i) \textit{Decision tree construction}: when constructing the prompt for LLM decision tree generation, we select the top-$z$ neighbors with the highest Jaccard similarity to form the sample block $\mathcal{I}_B$. (ii) \textit{Dynamic merge orchestration}: during the actual merging process, for each center node $v$, we select the top-$b$ neighbors with the highest similarity, denoted as $\mathcal{N}_v^{(b)}$. This two-stage approach restricts merge decisions to structurally relevant neighbors, reduces noise, and helps maintain the prompt length within the LLM's context window.

\noindent
{\bf Density-based scheduling.}
Let $\mathcal{A}$ be the set of available center nodes. Each node $v \in \mathcal{A}$ has an ego graph density defined as
\[
\text{EgoDensity}(v) = \frac{2|E_{\text{ego}}(v)|}{|V_{\text{ego}}(v)|(|V_{\text{ego}}(v)|-1)}.
\]
Nodes are sorted by density in descending order and processed sequentially. This density-based ordering prioritizes nodes in structurally rich regions first, ensuring high-coverage merges of core communities while deferring sparse or peripheral nodes to reduce unnecessary computations. For each neighbor $u \in \mathcal{N}_b(v)$, the merge decision $M(v,u) \in \{0,1\}$ indicates acceptance ($1$) or rejection ($0$). If $M(v,u)=1$, the available set is updated as
\[
\mathcal{A} \leftarrow \mathcal{A} \setminus V_{\text{ego}}(u),
\]
so that $u$ and its ego nodes cannot act as centers again but can still be merged into other communities. The overall process can be expressed as
\[
\text{for } v \in \text{sort}(\mathcal{A}, \text{EgoDensity}), \quad 
\mathcal{C}_v \gets \mathcal{C}_v \cup \{ u \in \mathcal{N}_b(v) : M(v,u)=1 \}.
\]
Together, these two mechanisms control which merges happen and the order of execution, ensuring stable local merges while preserving global community consistency in large-scale graphs.

\subsection{Multi-grain Community Refinement}
\label{sec: refine}
After merging, community backbones are recovered through subgraph-level aggregation. The refinement stage corrects ambiguous boundary vertices by refining individual node memberships using a coarse-to-fine rule set, organized into two phases: (i) multi-grain rule generation, establishing a multi-layered evaluation standard from global structural consistency to local connection strength; (ii) parallel rule execution, leveraging the independence of node-level decisions to enable efficient batch processing.

% \subsubsection{Multi-grain Rule Generation}
% We craft a tailored prompt to instruct the LLM in generating a hierarchy of coarse-to-fine node refinement rules. The prompt consists of three key components: data context, coarse-to-fine rule set specification, and output requirements. An example is given in Box \ref{box:prompt_spec2}.

% \noindent
% {\bf Data context.}
% We input a \textit{data profile} that provides dataset-level statistical metadata, together with a \textit{sample block}, which refers to example candidate communities formed during the merging phase.

% \noindent
% {\bf Multi-grain rule design.}
% The LLM generates refinement rules organized in a coarse-to-fine hierarchy: (1) \textit{Coarse rules} filter clearly inconsistent nodes using broad criteria: for example, "Remove nodes with $<20\%$ internal edges relative to their degree." ; (2) \textit{Medium-grained rules} examine local neighborhood structure: "Remove nodes whose role label differs from $80\%+$ of their neighbors.";
% (3) \textit{Fine-grained rules} apply precise structural metrics: "Remove nodes with local clustering coefficient $<0.5$ AND betweenness centrality $>$ threshold." This progressive filtering reduces refinement cost by approximately $60-70\%$ compared to applying fine-grained rules to all nodes, as most outliers are eliminated by coarse checks before expensive fine-grained evaluation.

% \noindent
% {\bf Output requirements.}
% The refinement procedure outputs, for each community, a list of nodes to drop, limited to outliers or structural inconsistencies.
\subsubsection{Multi-grain Rule Generation}
We prompt the LLM to generate a hierarchy of coarse-to-fine node refinement rules using three components: data context, rule-set specification, and output requirements. An example is given in Box \ref{box:prompt_spec2}.

\noindent
{\bf Data context.}
The input comprises a \textit{data profile} with dataset-level statistical metadata and a \textit{sample block} of example candidate communities formed during merging.

\noindent
{\bf Multi-grain rule design.}
The LLM generates a coarse-to-fine rule hierarchy: (1) \textit{Coarse rules} filter clear inconsistencies using broad criteria (e.g., nodes with $<20\%$ internal edges relative to their degree); (2) \textit{Medium-grained rules} examine local neighborhoods (e.g., nodes whose role label differs from $80\%+$ of their neighbors); and (3) \textit{Fine-grained rules} apply precise structural metrics (e.g., local clustering coefficient $<0.5$ and betweenness centrality above a threshold). This progressive filtering reduces refinement cost by approximately $60-70\%$ compared with applying fine-grained rules to all nodes, as coarse checks eliminate most outliers before expensive fine-grained evaluation.

\noindent
{\bf Output requirements.}
For each community, refinement outputs a list of nodes to drop, restricted to outliers or structural inconsistencies.

% The refinement procedure outputs, for each candidate community, a list of nodes to drop, limited to those confidently identified as outliers or structural inconsistencies. This results in a precise and conservative representation suitable for downstream tasks.

\subsubsection{Parallel Rule Execution}
% Our design decouples node-level refinement by ensuring they are conditionally independent given a community. This architectural choice unlocks dual-level parallelism: multiple communities can be processed concurrently, while nodes within each community are refined in batch using a shared, LLM-generated rule set. By applying these multi-granularity rules progressively from coarse to fine, our method efficiently prunes nodes that deviate from the community's structural and relational semantics. Consequently, the refinement stage introduces minimal computational overhead, scaling linearly with the node count and operating orders of magnitude faster than the preceding subgraph merging phase.
Our design makes node-level refinement decisions conditionally independent within each community, enabling two levels of parallelism: multiple communities are processed simultaneously, while nodes within each community are refined in batches using a shared LLM-generated rule set. 
Applied progressively from coarse to fine, these rules efficiently remove nodes that deviate from a community’s structural and relational patterns. 
Consequently, refinement incurs minimal computational overhead, scales linearly with the number of nodes, and runs orders of magnitude faster than the preceding subgraph-merging stage.
% Our design decouples node-level refinement by making decisions conditionally independent within each community. 
% This enables two levels of parallelism: multiple communities can be processed simultaneously, and nodes within each community are refined in batches using a shared LLM-generated rule set. 
% These rules are applied progressively from coarse to fine. This allows the method to efficiently remove nodes that deviate from a community’s structural and relational patterns. 
% Consequently, refinement incurs minimal computational overhead. The runtime increases linearly with the number of nodes and is orders of magnitude faster than the earlier subgraph-merging stage.

\begin{table*}[htbp!]
    \centering
    \caption{Overall performance comparison against unsupervised and semi-supervised baselines (results in percent $\pm$ standard deviation). ``N/A'' denotes the algorithm failing to converge within 4 days. The best results are highlighted in bold, and the strongest baseline results are underlined. With a fixed prompt, \ModelName reports mean$\pm$std over three independent LLM generations. Our method is purely unsupervised, yet it outperforms the state-of-the-art semi-supervised baseline.}
    \label{tab:full_comparison}
    \resizebox{\textwidth}{!}{
    \begin{tabular}{c|c|ccc|cccc|cc}
    \toprule
    \multirow{2}{*}{\textbf{Metric}} & \multirow{2}{*}{\textbf{Dataset}} & \multicolumn{3}{c|}{\textbf{Unsupervised}} & \multicolumn{4}{c|}{\textbf{Semi-supervised}} & \textbf{Unsupervised} & \multirow{2}{*}{\textbf{Improv.}} \\
    \cmidrule(lr){3-5} \cmidrule(lr){6-9} \cmidrule(lr){10-10}
    & & BigClam & ComE & CommGAN & Bespoke & SEAL & CLARE & PROCOM & \textbf{\ModelName} & \\
    \midrule
    \multirow{5}{*}{F1 Score} %
     & Facebook    & \textcolor{revisionpurple}{$36.59_{\pm 0.23}$} & \textcolor{revisionpurple}{\underline{$38.18_{\pm 0.64}$}} & \textcolor{revisionpurple}{$25.84_{\pm 0.99}$} & \textcolor{revisionpurple}{$29.29_{\pm 3.52}$} & \textcolor{revisionpurple}{$12.15_{\pm 6.31}$} & \textcolor{revisionpurple}{$23.45_{\pm 0.95}$} & \textcolor{revisionpurple}{$34.97_{\pm 1.19}$} & \textcolor{revisionpurple}{\textbf{40.82}$_{\pm 0.54}$} & \textcolor{revisionpurple}{+6.9\%} \\
     & Amazon      & \textcolor{revisionpurple}{$71.43_{\pm 0.05}$} & \textcolor{revisionpurple}{$78.32_{\pm 0.20}$} & \textcolor{revisionpurple}{$70.59_{\pm 0.76}$} & \textcolor{revisionpurple}{$65.65_{\pm 2.21}$} & \textcolor{revisionpurple}{$79.35_{\pm 0.45}$} & \textcolor{revisionpurple}{$78.38_{\pm 2.48}$} & \textcolor{revisionpurple}{\underline{$83.42_{\pm 0.27}$}} & \textcolor{revisionpurple}{\textbf{92.11}$_{\pm 0.78}$} & \textcolor{revisionpurple}{+10.4\%} \\
     & Livejournal & \textcolor{revisionpurple}{$40.75_{\pm 0.31}$} & N/A   & \textcolor{revisionpurple}{$19.68_{\pm 15.44}$} & \textcolor{revisionpurple}{$36.82_{\pm 1.53}$} & \textcolor{revisionpurple}{\underline{$52.95_{\pm 7.12}$}} & \textcolor{revisionpurple}{$41.50_{\pm 1.97}$} & \textcolor{revisionpurple}{$51.06_{\pm 1.55}$} & \textcolor{revisionpurple}{\textbf{54.25}$_{\pm 1.23}$} & \textcolor{revisionpurple}{+2.5\%} \\
     & DBLP        & \textcolor{revisionpurple}{$40.29_{\pm 0.08}$} & N/A & N/A   & \textcolor{revisionpurple}{$44.50_{\pm 2.06}$} & \textcolor{revisionpurple}{$17.98_{\pm 1.64}$} & \textcolor{revisionpurple}{\underline{$50.45_{\pm 0.53}$}} & \textcolor{revisionpurple}{$47.74_{\pm 3.94}$} & \textcolor{revisionpurple}{\textbf{54.73}$_{\pm 0.50}$} & \textcolor{revisionpurple}{+8.5\%} \\
     & Twitter     & \textcolor{revisionpurple}{\underline{$29.55_{\pm 0.05}$}} & \textcolor{revisionpurple}{$25.92_{\pm 0.14}$} & N/A   & \textcolor{revisionpurple}{$28.96_{\pm 0.88}$} & \textcolor{revisionpurple}{$12.83_{\pm 0.39}$} & \textcolor{revisionpurple}{$17.26_{\pm 0.90}$} & \textcolor{revisionpurple}{$24.69_{\pm 1.12}$} & \textcolor{revisionpurple}{\textbf{32.51}$_{\pm 0.31}$} & \textcolor{revisionpurple}{+10.0\%} \\
    \midrule
    \multirow{5}{*}{Jaccard} 
     & Facebook    & \textcolor{revisionpurple}{$26.11_{\pm 0.22}$} & \textcolor{revisionpurple}{\underline{$28.36_{\pm 0.44}$}} & \textcolor{revisionpurple}{$17.68_{\pm 1.03}$} & \textcolor{revisionpurple}{$20.25_{\pm 3.37}$} & \textcolor{revisionpurple}{$7.34_{\pm 4.69}$} & \textcolor{revisionpurple}{$15.91_{\pm 0.60}$} & \textcolor{revisionpurple}{$25.09_{\pm 0.93}$} & \textcolor{revisionpurple}{\textbf{31.77}$_{\pm 0.61}$} & \textcolor{revisionpurple}{+12.0\%} \\
     & Amazon      & \textcolor{revisionpurple}{$61.44_{\pm 0.07}$} & \textcolor{revisionpurple}{$69.93_{\pm 0.24}$} & \textcolor{revisionpurple}{$62.47_{\pm 0.71}$} & \textcolor{revisionpurple}{$59.54_{\pm 2.75}$} & \textcolor{revisionpurple}{$69.34_{\pm 0.93}$} & \textcolor{revisionpurple}{$69.51_{\pm 3.04}$} & \textcolor{revisionpurple}{\underline{$74.89_{\pm 0.23}$}} & \textcolor{revisionpurple}{\textbf{87.41}$_{\pm 1.04}$} & \textcolor{revisionpurple}{+16.7\%} \\
     & Livejournal & \textcolor{revisionpurple}{$31.77_{\pm 0.30}$} & N/A   & \textcolor{revisionpurple}{$16.88_{\pm 13.18}$} & \textcolor{revisionpurple}{$30.67_{\pm 1.63}$} & \textcolor{revisionpurple}{$41.60_{\pm 7.19}$} & \textcolor{revisionpurple}{$33.09_{\pm 2.16}$} & \textcolor{revisionpurple}{\underline{$41.71_{\pm 1.43}$}} & \textcolor{revisionpurple}{\textbf{46.97}$_{\pm 1.15}$} & \textcolor{revisionpurple}{+12.6\%} \\
     & DBLP        & \textcolor{revisionpurple}{$28.61_{\pm 0.09}$} & N/A & N/A   & \textcolor{revisionpurple}{\underline{$39.24_{\pm 2.28}$}} & \textcolor{revisionpurple}{$11.64_{\pm 1.04}$} & \textcolor{revisionpurple}{$39.03_{\pm 0.49}$} & \textcolor{revisionpurple}{$36.42_{\pm 4.00}$} & \textcolor{revisionpurple}{\textbf{45.20}$_{\pm 0.60}$} & \textcolor{revisionpurple}{+15.2\%} \\
     & Twitter     & \textcolor{revisionpurple}{$18.72_{\pm 0.04}$} & \textcolor{revisionpurple}{$16.43_{\pm 0.12}$} & N/A   & \textcolor{revisionpurple}{\underline{$19.51_{\pm 0.62}$}} & \textcolor{revisionpurple}{$7.25_{\pm 0.21}$} & \textcolor{revisionpurple}{$10.55_{\pm 0.56}$} & \textcolor{revisionpurple}{$16.30_{\pm 0.73}$} & \textcolor{revisionpurple}{\textbf{21.83}$_{\pm 0.25}$} & \textcolor{revisionpurple}{+11.9\%} \\
    \bottomrule
    \end{tabular}
    }
\end{table*}

\subsection{Global-view Community Selection}
\label{sec: rank}
Since the refinement stage produces a large pool of candidate communities, we then require a structural quality metric to identify high-quality communities. Existing ranking metrics such as modularity and density have notable limitations: modularity suffers from resolution limits~\cite{wang2024neural}, while density-based metrics overlook community boundaries~\cite{flaw_density}. To address this, we utilize local conductance to balance internal cohesion with external separation. \textcolor{revisionpurple}{However, standard conductance remains bounded for communities with the minimum number of internal edges required for connectivity, as formalized below.}
\textcolor{revisionpurple}{\begin{lemma}[Limitation of Standard Conductance]
\label{lem:conductance_limit}
Let $G=(V,E)$ be a graph and let $\mathcal{C}\subseteq V$ be a connected community with $n=|\mathcal{C}|\geq 2$. Denote its complement by $\bar{\mathcal{C}}=V\setminus\mathcal{C}$, its set of internal edges by $E(\mathcal{C})$, and the number of edges crossing from $\mathcal{C}$ to $\bar{\mathcal{C}}$ by $Cut(\mathcal{C},\bar{\mathcal{C}})$. Let $e=|E(\mathcal{C})|$ and $c=Cut(\mathcal{C},\bar{\mathcal{C}})$, and define the standard conductance score as $S_{base}(\mathcal{C})=c/e$. If the subgraph induced by $\mathcal{C}$ is a tree, then $e=n-1$ and $S_{base}(\mathcal{C})=c/e=c/(n-1)<\infty$.
\end{lemma}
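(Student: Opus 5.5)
The statement has two parts. First, a tree on $n$ vertices has exactly $n-1$ edges. Second, the ratio $c/e$ is therefore finite. The plan is to establish the edge count with the standard tree argument and then read off finiteness from $n\ge 2$.

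\textbf{Step 1: the edge count.} Let $T$ denote the subgraph induced by $\mathcal{C}$, which is assumed to be a tree; in particular it is connected and acyclic. I would show $e=|E(\mathcal{C})|=n-1$ by induction on $n$.
\begin{itemize}
\item For $n=1$ there are no edges, so the identity holds.
\item For the inductive step with $n\ge 2$, I first need a leaf. Take a maximal path in $T$. Each endpoint has degree exactly $1$: any further neighbor lying off the path would extend it, contradicting maximality, and any neighbor lying on the path would close a cycle, contradicting acyclicity.
\item Delete a leaf $\ell$ and its single incident edge. The result $T-\ell$ is still acyclic. It is still connected, because a leaf is never an interior vertex of a path between two other vertices.
\item $T-\ell$ is therefore a tree on $n-1$ vertices, and by induction it has $n-2$ edges. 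Restoring $\ell$ adds one edge, so $e=n-1$.
\end{itemize}
This bookkeeping, together with the existence of a leaf, is the only genuinely combinatorial step. I expect it to be routine rather than a real obstacle.

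\textbf{Step 2: finiteness of the score.} Substituting into the definition gives $S_{base}(\mathcal{C})=c/e=c/(n-1)$. Because $n\ge 2$, the denominator satisfies $n-1\ge 1>0$, so the quotient is well defined. The numerator $c=Cut(\mathcal{C},\bar{\mathcal{C}})$ counts edges of the finite graph $G$, so $c\le |E|<\infty$. Hence $c/(n-1)\le |E|<\infty$.

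\textbf{Remark on the role of the lemma.} It is worth noting why the statement matters for the subsequent metric. A sparsest connected community, one with the minimal $n-1$ internal edges, still receives a finite, bounded score. So $S_{base}$ does not sharply penalize such weakly cohesive communities, and this motivates the modified conductance introduced next. The only point needing care is the hypothesis $n\ge 2$. It is exactly what excludes $e=0$, the single-vertex case in which $c/e$ would be undefined or infinite. I would state this explicitly so the boundary case is accounted for.
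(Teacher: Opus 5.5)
Your proposal is correct and follows the same route as the paper. The paper gives no separate proof; the argument sits in the statement itself (a tree on $n$ vertices has $n-1$ edges, so $S_{base}(\mathcal{C})=c/(n-1)$ with a positive denominator), and your leaf-removal induction simply spells out that standard edge count, with $n\ge 2$ ruling out $e=0$ exactly as the hypothesis intends.
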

}
% \textcolor{revisionpurple}{\begin{lemma}[Limitation of Standard Conductance]
% \label{lem:conductance_limit}
% Let $G=(V,E)$ be a graph and let $\mathcal{C}\subseteq V$ be a connected community with $n=|\mathcal{C}|\geq 2$. Denote its complement by $\bar{\mathcal{C}}=V\setminus\mathcal{C}$, its set of internal edges by $E(\mathcal{C})$, and the number of edges crossing from $\mathcal{C}$ to $\bar{\mathcal{C}}$ by $Cut(\mathcal{C},\bar{\mathcal{C}})$. Let $e=|E(\mathcal{C})|$ and $c=Cut(\mathcal{C},\bar{\mathcal{C}})$, and define the standard conductance score as $S_{base}(\mathcal{C})=c/e$. If the subgraph induced by $\mathcal{C}$ is a tree, then $e=n-1$ and
% \begin{equation}
%     S_{base}(\mathcal{C})=\frac{c}{e}=\frac{c}{n-1}<\infty.
% \end{equation}
% \end{lemma}
% }
To address this limitation, we propose RDC (Ranking with Density and Conductance), a density-regularized conductance scoring function that incorporates a penalty derived from the minimum spanning tree constraint:
\begin{equation}
\label{eq:penalized_score}
RDC(\mathcal{C_R}) = \frac{Cut(\mathcal{C_R}, \bar{\mathcal{C_R}})}{|E(\mathcal{C_R})| - \epsilon(|\mathcal{C_R}| - 1)} .
\end{equation}
Here, $Cut(\mathcal{C_R}, \bar{\mathcal{C_R}})$ measures external separation, $|E(\mathcal{C_R})|$ is the number of internal edges, and $\epsilon(|\mathcal{C_R}| - 1)$ introduces a structural penalty against chain-like communities. We compute $RDC(\mathcal{C})$ for each candidate community, rank them in ascending order, and select the top-$N$ communities with the lowest scores.

The asymptotic behavior and edge-count sensitivity of RDC relative to standard conductance are given by the following theorem.

\begin{theorem}[Properties of RDC]
\label{thm:score_advantage}
Let $S_{new}(e)=c/(e-\eta)$ and $S_{base}(e)=c/e$, where $c>0$, $\eta=\epsilon(|\mathcal{C}|-1)>0$, and $e>\eta$. Under the continuous relaxation of $e$, the following properties hold:

\begin{enumerate}[leftmargin=*]
    \item $\displaystyle \lim_{e\to\eta^+}S_{base}(e)=\frac{c}{\eta}$, whereas $\displaystyle \lim_{e\to\eta^+}S_{new}(e)=+\infty$.
    
    \item The absolute derivative of $S_{new}$ with respect to $e$ is strictly larger than that of $S_{base}$:
    \begin{equation}
        \left| \frac{d S_{new}}{d e} \right| = \frac{c}{(e-\eta)^2} > \frac{c}{e^2} = \left| \frac{d S_{base}}{d e} \right|
    \end{equation}
    for all $e>\eta$.
\end{enumerate}
\end{theorem}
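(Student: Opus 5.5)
The plan is to treat both parts as elementary facts about the rational functions $S_{base}(e)=c/e$ and $S_{new}(e)=c/(e-\eta)$ on the open interval $e>\eta$. Here $c>0$ and $\eta>0$ are constants, since $c$ and $|\mathcal{C}|$ are fixed once the community is fixed. Under the continuous relaxation, both functions are smooth on $(\eta,\infty)$.

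For part (1), I will use continuity of $S_{base}$. The map $e\mapsto c/e$ is continuous at $e=\eta>0$, because $\eta\neq 0$. Hence the one-sided limit as $e\to\eta^+$ equals the value $c/\eta$, which is finite. For $S_{new}$, the denominator $e-\eta$ tends to $0$ through positive values as $e\to\eta^+$. Since the numerator is the positive constant $c$, the quotient diverges to $+\infty$. For rigor, given any $K>0$, choose $\delta=c/K$; then $\eta<e<\eta+\delta$ implies $S_{new}(e)>K$. This also makes explicit the contrast with Lemma~\ref{lem:conductance_limit}: $S_{base}$ stays bounded near the penalty threshold, while $S_{new}$ does not.

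For part (2), I will first compute the derivatives directly by the power rule:
\begin{equation*}
\frac{dS_{new}}{de}=-\frac{c}{(e-\eta)^2},\qquad \frac{dS_{base}}{de}=-\frac{c}{e^2}.
\end{equation*}
Taking absolute values gives $c/(e-\eta)^2$ and $c/e^2$, which are the expressions in the statement. The strict inequality then reduces to $(e-\eta)^2<e^2$. Since $c>0$, this is equivalent to $0<e-\eta<e$. The left inequality holds because $e>\eta$, and the right one holds because $\eta>0$. Squaring preserves strict order between these positive numbers, and inverting reverses it. Multiplying by $c>0$ then yields the claim for every $e>\eta$.

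There is no serious obstacle. The only point requiring care is to record where each hypothesis is used. Positivity of $\eta$, i.e.\ $\epsilon>0$ and $|\mathcal{C}|\ge 2$, gives the strictness in part (2) and the finite limit of $S_{base}$ in part (1). The condition $e>\eta$ ensures both functions are defined, positive, and differentiable, and that the divergence in part (1) is to $+\infty$ rather than $-\infty$. I would also note briefly that the relaxation from integer $e$ to real $e$ is only a modeling device, so the limits and derivatives should be read as statements about the interpolating functions.
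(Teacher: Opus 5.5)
Your proposal is correct and follows the paper's proof essentially step for step. The limits come from continuity of $c/e$ at $\eta>0$ versus $e-\eta\to 0^+$, and the derivative comparison reduces to $0<e-\eta<e$, hence $(e-\eta)^2<e^2$, multiplied through by $c>0$. Your explicit $\delta=c/K$ argument and your account of where each hypothesis is used simply add rigor that the paper leaves implicit.
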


\begin{proof}
For the first property, $e\to\eta^+$ gives $c/e\to c/\eta$, while $e-\eta\to0^+$ gives $c/(e-\eta)\to+\infty$. For the second property, differentiation yields $dS_{new}/de=-c/(e-\eta)^2$ and $dS_{base}/de=-c/e^2$. Since $e>\eta>0$, we have $0<e-\eta<e$ and hence $(e-\eta)^2<e^2$. Together with $c>0$, this gives $c/(e-\eta)^2>c/e^2$.
\end{proof}

\subsection{Complexity Analysis}
\label{sec: complexity analysis}
\textcolor{revisionpurple}{The overall time complexity of \ModelName is
$O(n\bar{d}^{k} + nbD + |\mathcal{C}_M|\bar{s}R
+ |\mathcal{C}_R|\bar{s}\bar{d})$.
Detailed stage-wise derivations and the space complexity analysis are provided in Appendix~\ref{sec:complexity_appendix}.}

    \section{Experiments}
    \label{sec.experiment}
    \subsection{Experimental setup}
% In this section, we present the experimental setup and report comprehensive empirical results. The experiments are designed to evaluate \ModelName systematically and to address the following research questions (RQs):
In this section, we describe the experimental setup and present comprehensive results to systematically evaluate \ModelName and address the following research questions (RQs):

\begin{itemize}
    \item \textbf{RQ1 (Overall performance)} As a fully unsupervised method, how does \ModelName perform compared with other unsupervised methods and representative semi-supervised baselines?

    \item \textbf{RQ2 (Ablation study)} How do the each key component of \ModelName contributes to the overall performance?

    \item \textbf{RQ3 (Efficiency study)} How efficient is \ModelName compared to other methods?
    
    \item \textbf{RQ4 (Hyperparameter sensitivity)} How sensitive is \ModelName to critical hyperparameters? 
    % Does the performance remain stable within reasonable parameter ranges?
  
    \item \textbf{RQ5 (Case study)} How does \ModelName accurately identify communities while providing interpretability? 
    % Specifically, what is the structure of the decision trees generated during the merge stage?
\end{itemize}
\subsubsection{Datasets.}
% Following previous works~\cite{procom,seal,clare}, we evaluate \ModelName on five real-world networks from SNAP: Facebook, Amazon, Livejournal, DBLP, and Twitter. These datasets span social, e-commerce, and academic networks and contain overlapping communities with partial labels. 
% Detailed dataset statistics are provided in Table ~\ref{tab: dataset} in Appendix~\ref{sec:datasets}. We follow the preprocessing protocols used by ProCom, CLARE, and SEAL to remove outliers and ensure a fair evaluation.
Following previous works \cite{seal,clare,procom}, we evaluate our method on five real-world datasets from SNAP\footnote{\url{http://snap.stanford.edu/data/}} (see Table \ref{tab:dataset}), covering social networks (Facebook, Twitter, Livejournal), e-commerce (Amazon), and academic collaboration (DBLP). 
\textcolor{revisionpurple}{We follow the dataset preprocessing protocols used by ProCom, CLARE, and SEAL to remove outliers and ensure a fair evaluation.}
All datasets contain overlapping communities with partial labels. 
% Following previous works \cite{seal,clare,procom}, we evaluate our method on five widely-used real-world datasets from the SNAP\footnote{\url{http://snap.stanford.edu/data/}}(see Table \ref{tab:dataset} for detailed statistics), covering diverse domains such as social networks (Facebook, Twitter, and Livejournal), e-commerce (Amazon), and academic collaboration (DBLP). Specifically, Facebook and Twitter consist of ego-networks with user-defined circles; Amazon is a product co-purchasing network categorized by product types; DBLP is a co-authorship network grouped by research fields; and Livejournal is a blogging community based on user-joined groups. All datasets contain overlapping communities and are partially labeled, meaning only a subset of nodes is assigned to ground-truth communities.
\begin{table}[t!]
    \centering
    \caption{Summary of dataset characteristics. The symbols $|V|$, $|E|$, and $|C|$ represent the total number of nodes, edges, and communities, while $|\overline{C}|$ indicates the mean community size, and $\bar{d}$ denotes the average degree.
    %\textcolor{red}{usually we use |V| or |E| or m/n to represent number of nodes and edges}
    }
    \label{tab:dataset}
    \vspace{1 em}
    \begin{tabular}{l | ccccc}
        \toprule
        \rowcolor[gray]{0.92} \textbf{Dataset} & \textbf{$|V|$} & \textbf{$|E|$} & \textbf{$|C|$} & $|\overline{C}|$ & \textbf{$\bar{d}$} \\
        \midrule
        Facebook    & 3,622   & 72,964    & 130   & 15.6 & 40.3 \\
        Amazon      & 13,178  & 33,767    & 4,517 & 9.3  & 5.1  \\
        Livejournal & 69,860  & 911,179   & 1,000 & 13.0 & 26.1 \\
        DBLP        & 114,095 & 466,761   & 4,559 & 8.4  & 8.2  \\
        Twitter     & 87,760  & 1,293,985 & 2,838 & 10.9 & 29.5 \\
        \bottomrule
    \end{tabular}
    \vspace{5pt}
\end{table}

\subsubsection{Baselines.}
To show the effectiveness of \ModelName, we compare it with both representative unsupervised and semi-supervised community detection methods. \textcolor{revisionpurple}{Following ProCom, we evaluate global and local methods under consistent settings, including both unsupervised and semi-supervised cases.} Unsupervised baselines include BigClam \cite{bigclam}, which uses matrix factorization to detect overlapping communities; ComE \cite{come}, which jointly learns node embeddings and community structures; and CommunityGAN \cite{communitygan}, which extends BigClam by using motifs instead of simple edges. Semi-supervised baselines include Bespoke \cite{bespoke}, which uses community size and structural information; SEAL \cite{seal}, which employs GANs to learn from labels; CLARE \cite{clare}, which uses a locator and a rewriter; and PROCOM \cite{procom}, which adopts a ``pretrain, prompt'' paradigm with dual-level pre-training.

\subsubsection{Evaluation metrics.}
% The performance is evaluated using the average similarity score based on bi-directional matching. Let $\dot{\mathcal{C}}$ and $\hat{\mathcal{C}}$ be the sets of ground-truth and predicted communities, respectively. The final score is defined as:
% \begin{equation}
%     \text{Score} = \frac{1}{2} \left( \frac{1}{|\hat{\mathcal{C}}|} \sum_{\hat{C} \in \hat{\mathcal{C}}} \max_{\dot{C} \in \dot{\mathcal{C}}} \text{Sim}(\hat{C}, \dot{C}) + \frac{1}{|\dot{\mathcal{C}}|} \sum_{\dot{C} \in \dot{\mathcal{C}}} \max_{\hat{C} \in \hat{\mathcal{C}}} \text{Sim}(\dot{C}, \hat{C}) \right)
% \end{equation}
% where $\text{Sim}(\cdot, \cdot)$ can be either the F1-score or Jaccard index.
For graph networks with available ground-truth communities, performance is typically evaluated using F1 and Jaccard scores \cite{bigclam,procom,seal,clare}based on bi-directional matching. Suppose there are $M$ ground-truth communities ${\dot{C}^{(i)}}$ and $N$ communities predicted by the model ${\hat{C}^{(j)}}$. For each predicted community, we compute its similarity to the most similar ground-truth community, and vice versa. The final score is obtained by averaging the two directions:
\begin{equation}
\frac{1}{2}\left(
\frac{1}{N}\sum_{j}\max_{i}\delta(\hat{C}^{(j)}, \dot{C}^{(i)})
+
\frac{1}{M}\sum_{i}\max_{j}\delta(\hat{C}^{(j)}, \dot{C}^{(i)})
\right),
\end{equation}
\vspace{-2pt}
where $\delta$ represents either the F1 score or the Jaccard score.
\subsubsection{Implementation Details.}
All experiments are conducted on an NVIDIA A100. We use GPT-5.2 as the LLM backbone for our main experiments. The number of predicted communities $N$ is set to 200 for Facebook, 1000 for Livejournal, 1500 for Amazon, and 5000 for DBLP and Twitter.
\textcolor{revisionpurple}{We use the official APIs with their default temperatures: 1.0 for GPT-5.2 and DeepSeek-V4-Pro, and 0.8 for Qwen3-Max.}

% Although we access DeepSeek-V4-Pro through its official API, its model weights are publicly available under the MIT License and support self-hosted inference~\cite{deepseekv4}, reducing dependence on continued API availability.
% \textcolor{revisionpurple}{We use the default settings of official LLM APIs, with temperature 1.0 for GPT-5.2 and DeepSeek-V4-Pro and 0.8 for Qwen3-Max.
% For each main experiment, we fix the prompt and report mean$\pm$std over three independent LLM generations.}
% The source code will be released at~\url{https://anonymous.4open.science/r/KDD2026LUCID-A702}.

\subsection{Overall performance (RQ1)}
\label{sec:overall pfms}
\textcolor{revisionpurple}{We provide the overall performance comparison in Table~\ref{tab:full_comparison}, where the reported \ModelName results use GPT-5.2 as the LLM backbone. Results with the open-weight DeepSeek-V4-Pro~\cite{deepseekv4} are reported in Appendix~\ref{sec:deepseek_results}}.
Following previous works ~\cite{procom,seal,clare}, we report the performance of unsupervised community detection algorithms without variation. 
Similarly, for semi-supervised algorithms, we randomly select 10 communities for training (prompts for PROCOM) and use the remaining communities for testing. 
% This ensures a fair comparison and maintains consistency with the established evaluation protocol. 
% Based on the results in Table~\ref{tab:full_comparison}, we note the following key observations:
We observe the following patterns: (1) \ModelName significantly outperforms existing unsupervised methods across all datasets.
Compared with the best unsupervised baseline on each dataset, our method achieves average improvements of \textcolor{revisionpurple}{20.7\%} in F1 score and \textcolor{revisionpurple}{31.9\%}in Jaccard score.
(2) As an unsupervised method, \ModelName consistently outperforms state-of-the-art semi-supervised baselines. 
Specifically, \ModelName achieves the best performance across all five datasets, with average relative gains of \textcolor{revisionpurple}{10.1\%} in F1 and \textcolor{revisionpurple}{16.6\%} in Jaccard over the strongest semi-supervised baseline on each dataset. This shows that \ModelName captures community structures from network topology without any labels.
(3) While semi-supervised models like PROCOM occasionally achieve second-best results, they rely on labelled data and training. 
When compared with the strongest baseline in each row, including both unsupervised and semi-supervised methods, \ModelName achieves average relative gains of 7.7\% in F1 and 13.7\% in Jaccard.
% We observe the following patterns:(1) \ModelName significantly outperforms existing unsupervised methods across all datasets. 
% Compared to the best unsupervised baseline, our method achieves an average improvement of 22.2\% in F1 score and 19.2\% in Jaccard score. 
% (2) As an unsupervised method, \ModelName consistently outperforms state-of-the-art semi-supervised baselines. 
% Specifically, \ModelName achieves the best performance across all five datasets, with average gains of 8.3\% F1 and 12.0\% Jaccard. This is shows that \ModelName captures community structures from network topology without any labels.
% (3) While semi-supervised models like PROCOM occasionally achieve second-best results, they rely on labelled data and training. 

\begin{figure}[t!]
    \centering
    \includegraphics[width=0.45\textwidth]{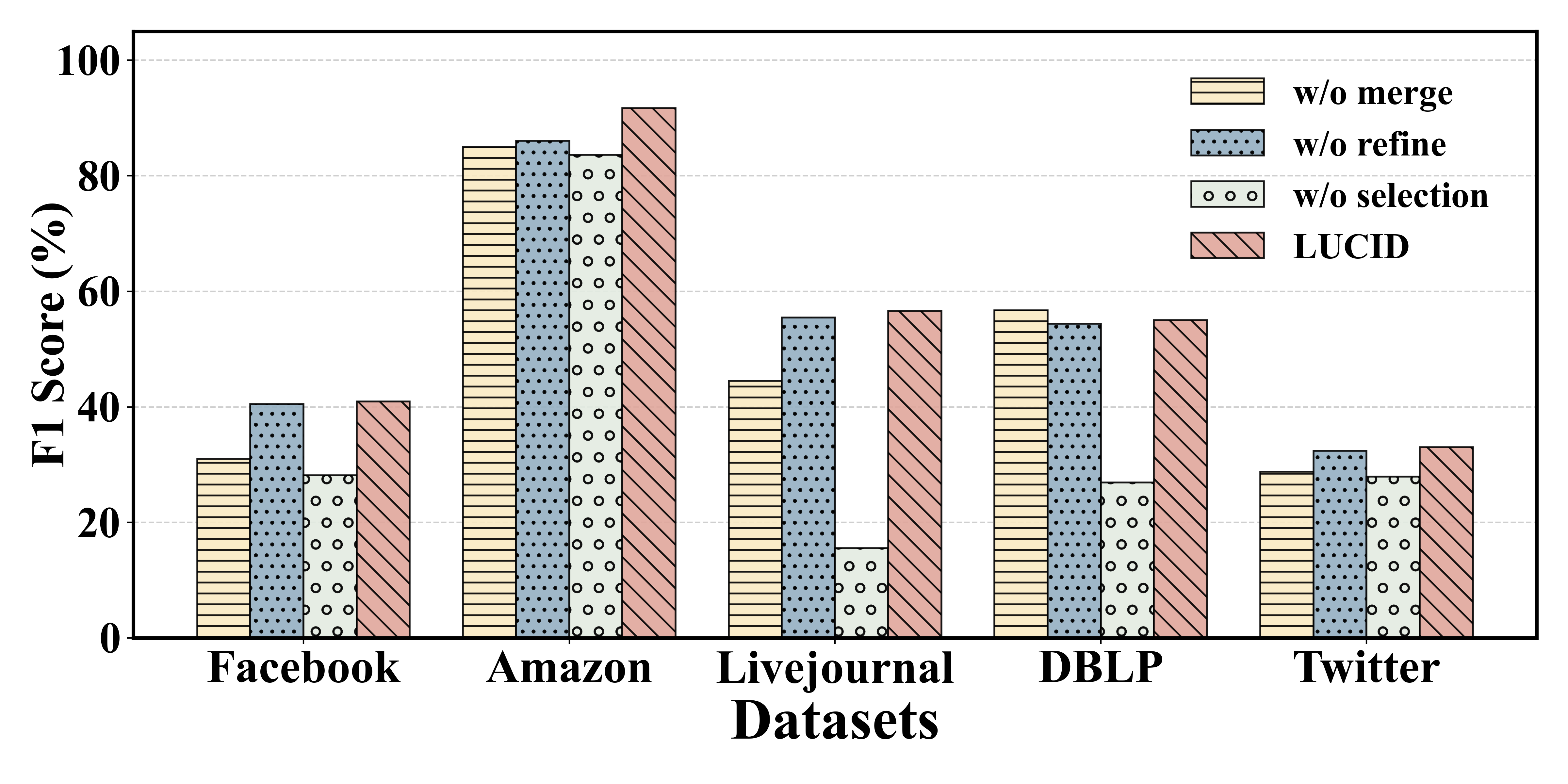} 
    \caption{Ablation study on the effectiveness of the four stages in \ModelName.}
    \label{fig:ablation}
\end{figure}

\begin{table}[t!]
\centering
\small
\setlength{\tabcolsep}{6pt}
\caption{Ablation study with different rank metrics on F1 score (\%) across five datasets.}
\label{tab:ablation_rank}
\begin{tabular}{lcccc}
\toprule
\textbf{Dataset} & \textbf{Density} & \textbf{Modularity} & \textbf{LC} & \textbf{RDC} \\
\midrule
Facebook     & 30.27 & 24.54 & 41.00 & 40.95 \\
Amazon  & 44.03 & 60.41 & 86.54 & 91.71 \\
Livejournal & 9.35 & 17.70 & 52.64 & 56.57 \\
DBLP   & 22.88 & 35.76 & 54.28 & 55.03 \\
Twitter & 27.98 & 26.46 & 32.71 & 33.03 \\
\midrule
\textbf{Improve (\%)} & +161.44 & +83.40 & +3.14 & -- \\
\bottomrule
\end{tabular}
\end{table}

% \begin{table}[t]
%     \centering
%     \caption{LLM necessity ablation results on F1 score (\%). Improve reports the average relative gain of LLM Rules over each heuristic across five datasets.}
%     \label{tab:llm_necessity}
%     \resizebox{\columnwidth}{!}{
%     \begin{tabular}{l|cccccc}
%         \toprule
%         \textbf{Metric} & \textbf{Facebook} & \textbf{Amazon} & \textbf{LiveJournal} & \textbf{DBLP} & \textbf{Twitter} & \textbf{Improve} \\
%         \midrule
%         Density & 34.19 & 71.66 & 49.39 & 54.69 & 29.73 & +13.4\% \\
%         Modularity & 38.86 & 85.54 & 52.50 & 52.34 & 31.17 & +5.0\% \\
%         LLM Rules & 40.82 & 92.11 & 54.25 & 54.73 & 32.51 & -- \\
%         \bottomrule
%     \end{tabular}
%     }
%     \vspace{3 pt}
% \end{table}
\begin{table}[t]
    \centering
    \caption{LLM necessity ablation results on F1 score (\%), where LJ denotes LiveJournal. Improv. reports the average relative gain of LLM Rules over each heuristic across five datasets.}
    \label{tab:llm_necessity}
    \small
    \setlength{\tabcolsep}{2pt}
    \begin{tabular}{@{}l|cccccc@{}}
        \toprule
        \textbf{Metric}
        & \textbf{Facebook}
        & \textbf{Amazon}
        & \textbf{LJ}
        & \textbf{DBLP}
        & \textbf{Twitter}
        & \textbf{Improv.} \\
        \midrule
        Density
        & 34.19 & 71.66 & 49.39 & 54.69 & 29.73 & +13.4\% \\
        Modularity
        & 38.86 & 85.54 & 52.50 & 52.34 & 31.17 & +5.0\% \\
        LLM Rules
        & 40.82 & 92.11 & 54.25 & 54.73 & 32.51 & -- \\
        \bottomrule
    \end{tabular}
    \vspace{2 pt}
\end{table}
% \begin{table}[t]
%     \centering
%     \caption{LLM necessity ablation results on F1 score (\%). The best result is highlighted in bold, and the stronger heuristic is underlined.}
%     \label{tab:llm_necessity}
%     \begin{tabular}{l|cccc}
%         \toprule
%         \textbf{Dataset} & Density & Modularity & \textbf{LLM Rules} & \textbf{Improv.} \\
%         \midrule
%         Facebook    & 34.19 & 38.86 & 40.82 & +5.0\% \\
%         Amazon      & 71.66 & 85.54 & 92.11 & +7.7\% \\
%         Livejournal & 49.39 & 52.50 & 54.25 & +3.3\% \\
%         DBLP        & 54.69 & 52.34 & 54.73 & +0.1\% \\
%         Twitter     & 29.73 & 31.17 & 32.51 & +4.3\% \\
%         \bottomrule
%     \end{tabular}
% \end{table}
\subsection{Ablation studies (RQ2)}
\label{sec:ablation}
%首先kego是指直接用特定数量的kego计算指标。w/o merge是指完整流程中去除merge步骤，直接用kego进行refine，再进行rank得到最终的结果。w/o refine指完整流程中去除refine步骤，直接用merge得到的社区进行rank后计算指标。而w/o rank则是指kego进行merge再进行refine,最后不进行rank而是随机选择特定数量的社区计算指标
\noindent \textbf{Stages in LUCID ablation study.} 
Figure~\ref{fig:ablation} presents the stage-wise ablation results of \ModelName. Overall, removing any stage generally degrades performance, demonstrating that the stages play complementary roles in the framework. 
Removing the merge stage causes substantial declines on Amazon and Livejournal but yields a slight improvement on DBLP, suggesting that merge effectiveness is dataset-dependent and sensitive to structure. \textcolor{revisionpurple}{DBLP communities are small and tight with low overlap, merging structurally close $k$-ego subgraphs can blur their boundaries.} Removing refinement consistently impairs performance across multiple datasets, confirming its importance in reducing boundary noise. The removal of selection leads to severe drops on Livejournal and DBLP and moderate declines elsewhere, highlighting the role of global structural filtering in balancing internal cohesion and boundary clarity, especially in networks with substantial community overlap. Although the contribution of each stage varies across datasets, their combined use enables \ModelName to achieve the most robust overall performance.

\noindent \textbf{Ranking metric ablation study.}
% For a detailed breakdown, please refer to Appendix \ref{sec:rank_ablation}.
% Table~\ref{tab:ablation_rank} presents the ablation study comparing different rank algorithms. Our proposed RDC  algorithm consistently outperforms baseline ranking methods across most datasets. Specifically, RDC achieves substantial improvements over Density (+161.44\%) and Modularity (+83.40\%) on average, demonstrating its effectiveness in capturing community quality. Compared to Local Conductance (LC), RDC shows a modest improvement, indicating that both methods effectively leverage conductance-based metrics. Notably, RDC achieves the best performance on Amazon and Livejournal. On Facebook, RDC performs comparably to LC, suggesting that for networks with relatively clear community boundaries, both conductance-based methods are effective. The superior performance of RDC can be attributed to its joint consideration of density and conductance, which enables better identification of communities with both high internal cohesion and clear boundaries across diverse network structures.
Table~\ref{tab:ablation_rank} presents the ablation study comparing different ranking metrics. Our proposed RDC consistently outperforms baselines on most datasets, achieving substantial average gains over Density (+161.44\%) and Modularity (+83.40\%). Compared to Local Conductance (LC), RDC shows modest improvements, indicating that both effectively leverage conductance-based metrics. RDC attains the best performance on Amazon and Livejournal, and performs comparably to LC on Facebook, where community boundaries are relatively clear. The superior performance of RDC can be attributed to its joint consideration of density and conductance, which enables better identification of communities with both high internal cohesion and clear boundaries across diverse network structures.

\begin{figure}[t!]
    \centering
    \includegraphics[width=0.45\textwidth]{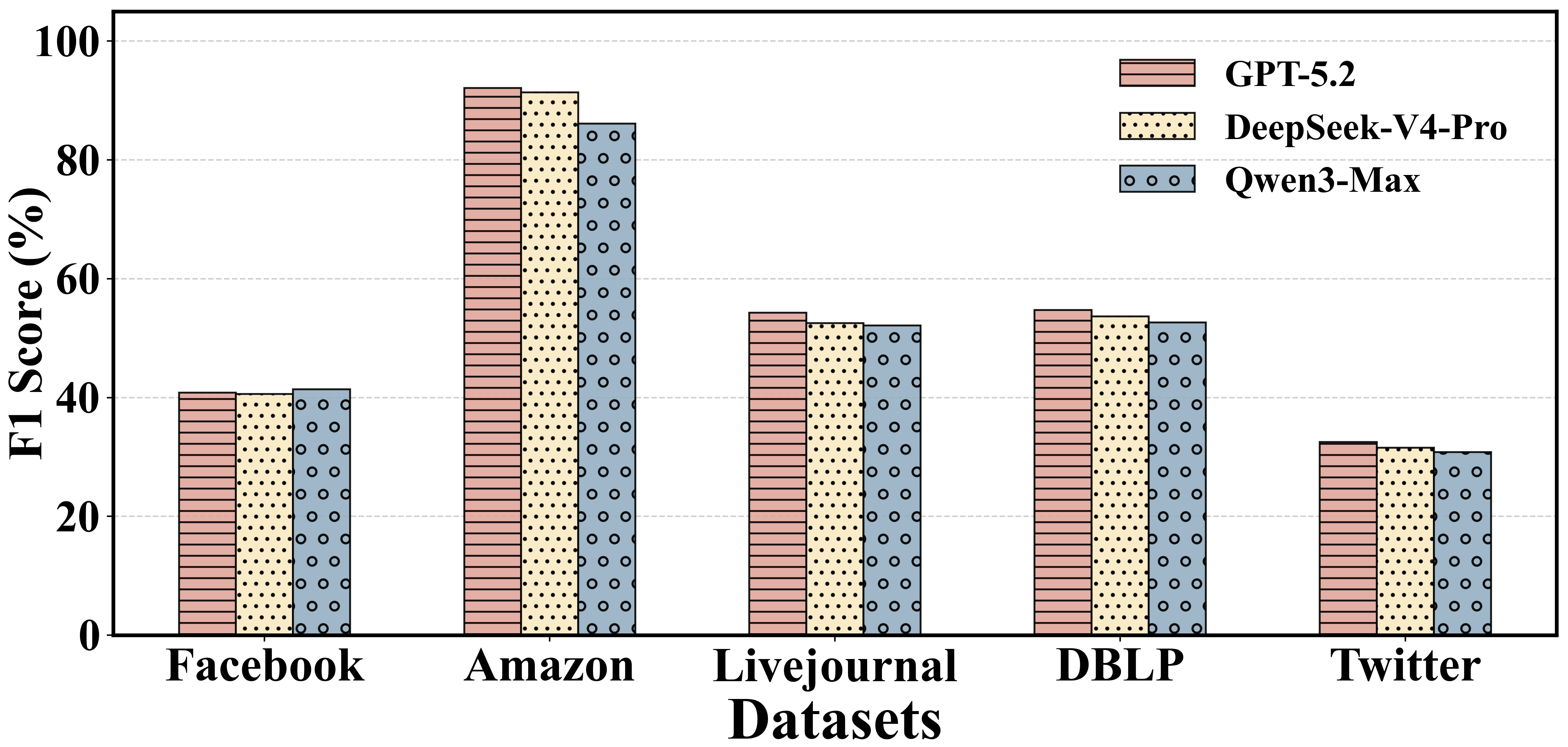} 
    \caption{Performance comparison with different LLMs, measured by F1 score (\%).}
    \label{fig:llm_comparison}
\end{figure}

\noindent \textbf{LLM necessity ablation study.} 
\textcolor{revisionpurple}{To examine whether the LLM is necessary, we remove it from the two stages. We consider Density and Modularity to replace LLM-induced rules. During the merge stage, two communities are merged only if the corresponding metric increases after merging. During the refinement satge, a node is removed only if its removal increases that metric. As shown in Table~\ref{tab:llm_necessity}, LLM-induced rules outperform Density and Modularity by 13.4\% and 5.0\% on average across the five datasets, respectively.
%, supporting our motivation for using the LLM as a rule inducer. 
Further process-level comparisons are provided in Appendix~\ref{sec:process_quality_appendix}.}

\begin{figure}[t!]
    \centering
    \includegraphics[width=\linewidth]{Figure/sensitivity2.png}
    
    \caption{Sensitivity analysis of hyperparameters.}
    
    \label{fig:sensitivity}
\end{figure}

\subsection{Efficiency study (RQ3)}
\label{sec:efficiency}
\ModelName achieves competitive computational efficiency while maintaining superior detection performance. Additionally, \ModelName keeps token costs low by limiting each experiment to just two API calls. Detailed efficiency analysis including runtime comparisons with baselines, time with stages, and token cost breakdown are provided in Table ~\ref{tab:efficiency} in Appendix~\ref{sec:efficiency_appendix} and Table~\ref{tab:token_cost} in Appendix~\ref{sec:token_cost_appendix}.

\subsection{Hyperparameter sensitivity (RQ4)}
\label{sec:hyper}
\textcolor{revisionpurple}{We evaluate the sensitivity of \ModelName to LLM selection, tree depth, merge sampling, top-$b$ neighbors, refinement sampling, and the penalty coefficient $\epsilon$. Figures~\ref{fig:llm_comparison} and~\ref{fig:sensitivity} show that performance variations are typically within 1--2\%, demonstrating robustness without extensive tuning. Detailed experimental results and analysis for each hyperparameter are provided in Appendix~\ref{sec:hyper_appendix}.
Based on our analysis, we adopt the default settings in Table~\ref{tab:hyper-parameters} in Appendix~\ref{sec:a_default_hyperparameter}.}

\begin{figure}[t!]
    \centering
    \includegraphics[width=0.5\textwidth]{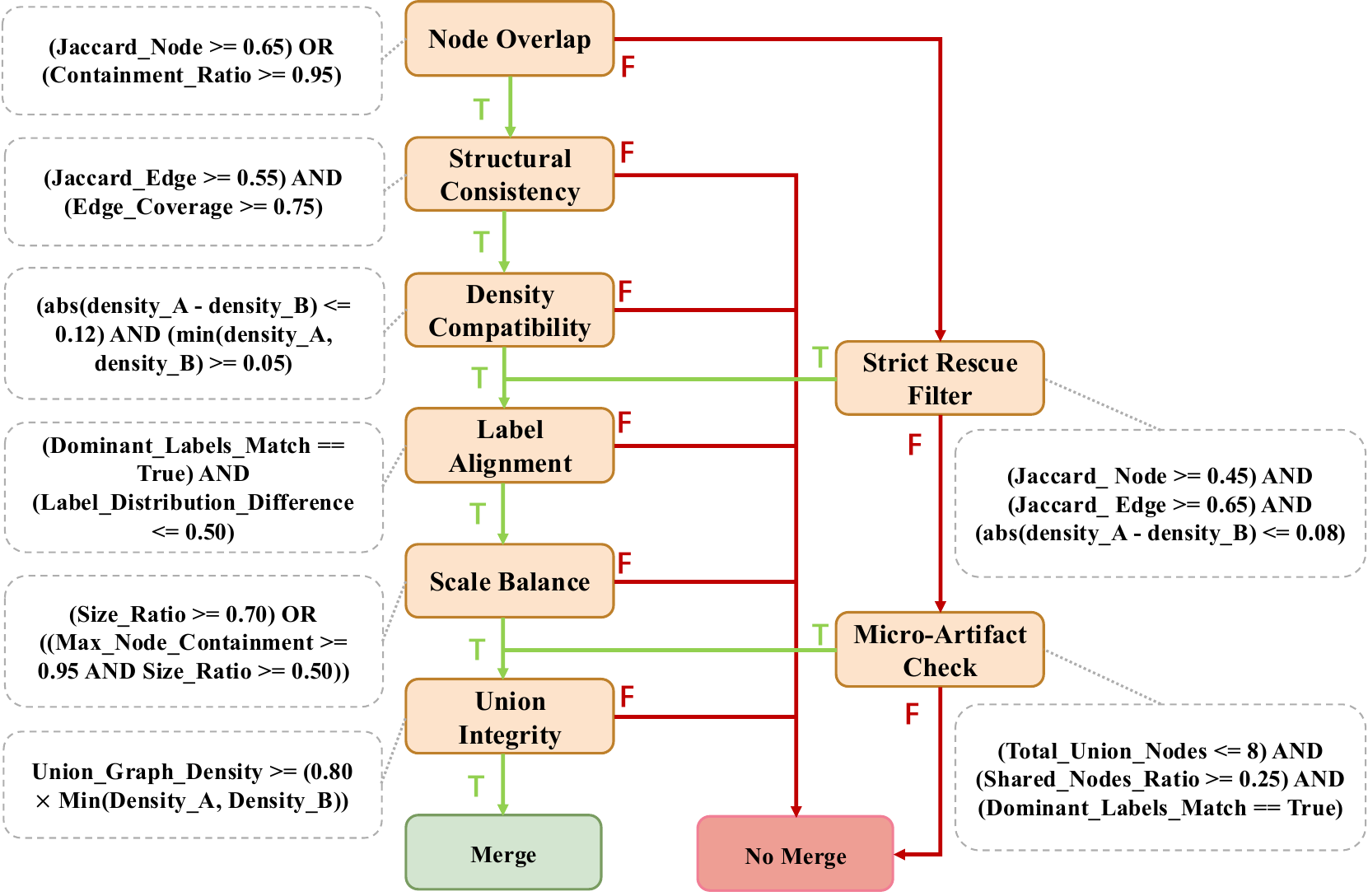} 
    \caption{The decision tree generated by the LLM for subgraph merging.}
    \label{fig:decision_tree}
\end{figure}
\vspace{-2mm}

\begin{figure}[t!]
\centering
\includegraphics[width=0.5\textwidth]{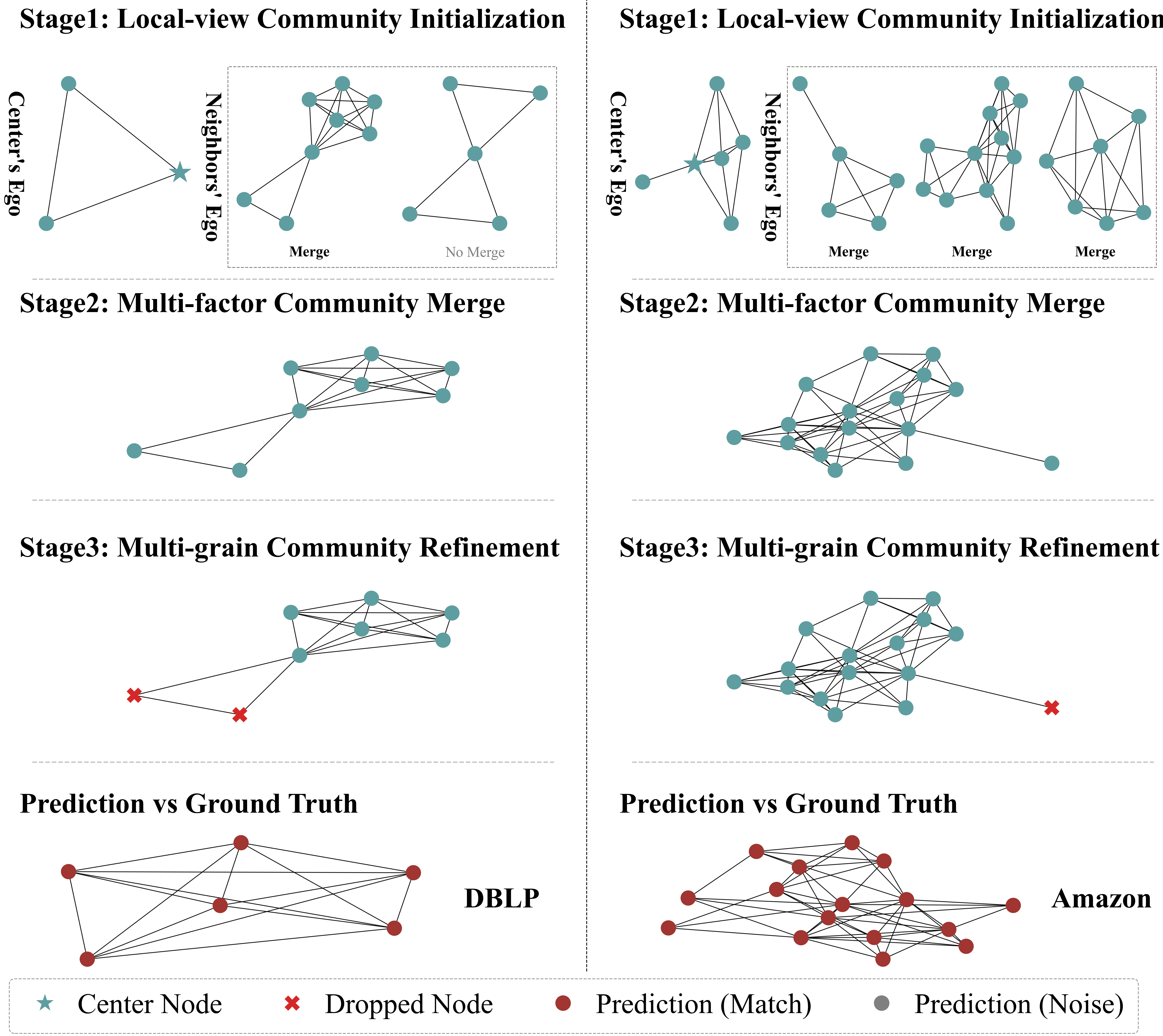}
\caption{Visualization of overall procedure interpretability.}
\label{fig: process}
\end{figure}

\subsection{Case studies (RQ5)}
\label{sec:case study}
\noindent
\textbf{A case study on merging rules.}
% We visualize the LLM-induced merging rules in Figure~\ref{fig:decision_tree}. The process follows a hierarchical decision structure that balances structural connectivity with community quality. 
Figure~\ref{fig:decision_tree} illustrates LLM-induced merging rules in \ModelName, which follows a hierarchical decision structure to balance structural connectivity and community quality. 
(1) Initial filtering: the process begins with the \textit{node overlap gate}, prioritizing subgraph pairs with significant overlap or directional containment. 
(2) Validation sequence: candidates passing the initial gate undergo a multi-step validation, starting with structural consistency and \textit{density compatibility}. Subsequently, \textit{label alignment} and \textit{scale balance} ensure structural uniformity and compatibility. 
(3) Rescue mechanisms: to handle boundary cases, the model incorporates a \textit{strict rescue filter} to recover candidates with moderate overlap and a \textit{micro-artifact check} for small, high-density cliques. 
(4) Final integrity: a \textit{union integrity test} mandates that the merged union retains high density to preserve internal cohesion. 

\noindent
\textbf{A case study on the interpretability of \ModelName.}
% Interpretability means understanding how inputs lead to outputs. 
Interpretability means a transparent decision process from inputs to outputs.
Figure~\ref{fig: process} visualizes \ModelName's LLM-symbolized pipeline on DBLP and Amazon. 
On DBLP, stage 1 shows a center vertex connected to two neighbors (top and bottom) with their ego networks displayed. 
\ModelName merges the top neighbor into the community because it demonstrates strong structural consensus, with high overlap ratio, label consistency, and density similarity validated through dedicated gates. 
Conversely, the bottom neighbor is rejected due to insufficient shared context. The community is then refined by removing nodes with low internal degrees and weak triadic closures, resulting in an exact 5-clique that matches the ground truth. 
% \ModelName merges the top neighbor into the community containing the center vertex through gates for high overlap ratio, label consistency, and density similarity, while rejecting the bottom one for insufficient shared context. Then the communities are refined by removing nodes with low internal degree and weak triadic closure to form a precise 5-clique matching ground truth. 
Similarly, on Amazon, LUCID aggregates dense neighbors and prunes outliers to produce a prediction aligned with ground truth. These examples highlight the transparency and interpretability of \ModelName via explicit structural rules. 
    \section{Conclusion}
    \label{sec.conclusion}
    We propose \ModelName, a four-stage LLM-symbolized framework for interpretable, training-free, and label-free community detection. It constructs communities via local-view initialization, multi-factor merge, multi-grain refinement, and global selection. Rather than acting as end-to-end predictors, the LLM serve as rule inducers: they build a multi-factor decision tree for merging and induce coarse-to-fine rules for refinement to remove boundary noise. \textcolor{revisionpurple}{Experiments show that \ModelName achieves state-of-the-art performance, with average relative improvements of 20.7\% in F1 and 31.9\% in Jaccard over the strongest unsupervised baselines, while outperforming the strongest semi-supervised baselines by 10.1\% and 16.6\%.} 

    \newpage
    \normalem
    \bibliographystyle{ACM-Reference-Format}
    \bibliography{ref}  

    \vspace{3pt}
    \appendix
    % \onecolumn
    \raggedbottom
    \section*{APPENDIX}
    % \section{Notations}
% \label{sec:notation}
% \begin{table}[htbp]
%     \centering
%     \caption{Summary of Notations}
%     \label{tab: notations}
%     \small
%     \renewcommand{\arraystretch}{1.1}
%     \setlength{\tabcolsep}{6pt}
%     \begin{tabular}{cl}
%         \toprule
%         \textbf{Symbol} & \textbf{Description} \\ \midrule
%         $G = (V, E)$ & Input graph \\
%         %记得加上N(v, G), for ease of presentation, we omit G when the context is clear
%         $\mathcal{N}(v)$ & Neighborhood of node $v$ \\
%         $G_v^{(k)}$ & $k$-ego network of $v$ \\ 
%         % $\mathcal{C}_v$ & Community centered on $v$ \\ 
%         % $\mathcal{I} = \{\mathcal{I}_P, \mathcal{I}_B\}$ & Input context in merge stage \\
%         % $v_c$ & Center node \\
%         % $G_{v_c}^{(k)}$ & $k$-ego network of $v_c$ \\
%         % $G_{n_j}^{(k)}$ & $k$-ego network of neighbor $j$ \\
%         $\mathcal{T}$ & Multi-factor decision tree \\
%         % $V_{\text{ego}}, E_{\text{ego}}$ & Ego-network sets \\
%         % $\mathcal{N}_v^{(K)}$ & Top-$K$ neighbors \\
%         $\mathcal{A}$ & Set of available center nodes \\
%         %$M(v,u)$ & Merge decision \\
%         $\mathcal{C_M}$ & Candidate communities after merge \\
%         $\mathcal{C_R}$ & Candidate communities after refinement \\ 
%         % $\bar{\mathcal{C_R}}$ & Remaining graph in selection stage \\
%         $\mathcal{C}$ & Final candidate communities \\\bottomrule
%     \end{tabular}
% \end{table}
\section{Notations}
\label{sec:notation}
\begin{table}[htbp]
    \centering
    \caption{Summary of Notations}
    \label{tab: notations}
    \small
    \renewcommand{\arraystretch}{1.1}
    \setlength{\tabcolsep}{6pt}
    \begin{tabular}{cl}
        \toprule
        \textbf{Symbol} & \textbf{Description} \\ \midrule
        $G = (V, E)$ & Input graph \\
        \textcolor{revisionpurple}{$M$} & \textcolor{revisionpurple}{Number of ground-truth communities} \\
        \textcolor{revisionpurple}{$N$} & \textcolor{revisionpurple}{Number of predicted communities} \\
        %记得加上N(v, G), for ease of presentation, we omit G when the context is clear
        $\mathcal{N}(v)$ & Neighborhood of node $v$ \\
        $G_v^{(k)}$ & $k$-ego network of $v$ \\ 
        % $\mathcal{C}_v$ & Community centered on $v$ \\ 
        % $\mathcal{I} = \{\mathcal{I}_P, \mathcal{I}_B\}$ & Input context in merge stage \\
        % $v_c$ & Center node \\
        % $G_{v_c}^{(k)}$ & $k$-ego network of $v_c$ \\
        % $G_{n_j}^{(k)}$ & $k$-ego network of neighbor $j$ \\
        $\mathcal{T}$ & Multi-factor decision tree \\
        % $V_{\text{ego}}, E_{\text{ego}}$ & Ego-network sets \\
        % $\mathcal{N}_v^{(K)}$ & Top-$K$ neighbors \\
        $\mathcal{A}$ & Set of available center nodes \\
        %$M(v,u)$ & Merge decision \\
        $\mathcal{C_M}$ & Candidate communities after merge \\
        $\mathcal{C_R}$ & Candidate communities after refinement \\ 
        % $\bar{\mathcal{C_R}}$ & Remaining graph in selection stage \\
        $\mathcal{C}$ & Final candidate communities \\\bottomrule
    \end{tabular}
\end{table}

\section{Complexity analysis}
\label{sec:complexity_appendix}
% We analyze the time complexity of \ModelName across its four stages. Let $G=(V,E)$ be a graph with $|V|=n$ nodes and $|E|=m$ edges, $\bar{d}$ denote the average degree, $k$ be the ego network radius, $b$ be the top-$b$ neighbor sampling parameter in the merge stage, $D$ be the decision tree depth, $|\mathcal{C_M}|$ be the number of candidate communities after merging, $|\mathcal{C_R}|$ be the number of candidate communities after refinement, and $\bar{s}$ be the average community size. Stage 1 is dominated by $k$-ego network decomposition, requiring $O(n \cdot \bar{d}^k)$ time when $k \geq 2$. Stage 2 incurs the primary computational overhead of $O(n \cdot b \cdot D)$ from decision tree evaluation for top-$b$ neighbors. Stage 3 scales as $O(|\mathcal{C_M}| \cdot \bar{s} \cdot R)$ where $R$ is the number of rule layers in the coarse-to-fine hierarchy. Stage 4 requires $O(|\mathcal{C_R}| \cdot \bar{s} \cdot \bar{d})$ time for computing structure-stability-regularized scores. Combining all four stages, the overall time complexity is $O(n \cdot \bar{d}^k + n \cdot b \cdot D + |\mathcal{C_M}| \cdot \bar{s} \cdot R + |\mathcal{C_R}| \cdot \bar{s} \cdot \bar{d})$. Stage 2 incurs the dominating cost $O(n \cdot b \cdot D)$, which scales linearly with the number of nodes.
We analyze the time complexity of \ModelName across its four stages. Let $G=(V,E)$ be a graph with $|V|=n$ nodes and $|E|=m$ edges, $\bar{d}$ denote the average degree, $k$ be the ego network radius, $b$ be the top-$b$ neighbor sampling parameter in the merge stage, $D$ be the decision tree depth, $|\mathcal{C_M}|$ be the number of candidate communities after merging, $|\mathcal{C_R}|$ be the number of candidate communities after refinement, and $\bar{s}$ be the average community size. Stage 1 is dominated by $k$-ego network decomposition, requiring $O(n \cdot \bar{d}^k)$ time when $k \geq 2$. Stage 2 incurs the primary computational overhead of $O(n \cdot b \cdot D)$ from decision tree evaluation for top-$b$ neighbors. Stage 3 scales as $O(|\mathcal{C_M}| \cdot \bar{s} \cdot R)$ where $R$ is the number of rule layers in the coarse-to-fine hierarchy. Stage 4 requires $O(|\mathcal{C_R}| \cdot \bar{s} \cdot \bar{d})$ time for computing structure-stability-regularized scores. Combining all four stages, the overall time complexity is $O(n \cdot \bar{d}^k + n \cdot b \cdot D + |\mathcal{C_M}| \cdot \bar{s} \cdot R + |\mathcal{C_R}| \cdot \bar{s} \cdot \bar{d})$.

The space complexity is dominated by storing the graph structure ($O(n+m)$), pre-computed $k$-ego networks ($O(n \cdot \bar{d}^k)$), and candidate communities ($O(|\mathcal{C_M}| \cdot \bar{s} + |\mathcal{C_R}| \cdot \bar{s})$), yielding overall space complexity $O(n+m + n \cdot \bar{d}^k + |\mathcal{C_M}| \cdot \bar{s} + |\mathcal{C_R}| \cdot \bar{s})$.

\section{Default hyperparameter settings}
\label{sec:a_default_hyperparameter}
\begin{table}[htbp]
\centering
\caption{Hyperparameter in \ModelName}
\label{tab:hyper-parameters}
\renewcommand{\arraystretch}{1.2} % Adjust row height for better appearance
\begin{tabular}{c|c}
\toprule
\textbf{Hyperparameter} & \textbf{Value} \\
\midrule
$k$ & Search from \{1, 2\} \\
Number of node labels $K$ & 4 \\
LLM & GPT-5.2 \\
Merge sample size & 3 \\
Max neighbors per center & 3 \\
Top-$M$ neighbors for merge & 10 \\
Tree depth & 11-13 \\
Refine sample size & 10 \\
$\epsilon$ & 0.2 \\
\bottomrule
\end{tabular}
\end{table}

\section{DeepSeek-V4-Pro results}
\label{sec:deepseek_results}
Table~\ref{tab:deepseek_results} reports the performance of \ModelName when DeepSeek-V4-Pro is used as the LLM backbone. \textcolor{revisionpurple}{Although we access it through the official API, its weights are available under the MIT License and support self-hosting~\cite{deepseekv4}, reducing dependence on API availability.} Improvements are relative to the strongest baseline in the corresponding row of Table~\ref{tab:full_comparison}.

\begin{table}[htbp]
    \centering
    \caption{Performance of \ModelName with DeepSeek-V4-Pro (results in percent $\pm$ standard deviation).}
    \label{tab:deepseek_results}
    \small
    \begin{tabular}{c|lcc}
        \toprule
        \textbf{Metric} & \textbf{Dataset} & \textbf{\ModelName (DeepSeek)} & \textbf{Improv.} \\
        \midrule
        \multirow{5}{*}{F1 Score}
        & Facebook    & $40.60_{\pm 0.63}$ & +6.3\% \\
        & Amazon      & $91.36_{\pm 0.41}$ & +9.5\% \\
        & Livejournal & $52.51_{\pm 0.22}$ & -0.8\% \\
        & DBLP        & $53.65_{\pm 0.44}$ & +6.3\% \\
        & Twitter     & $31.56_{\pm 0.05}$ & +6.8\% \\
        \midrule
        \multirow{5}{*}{Jaccard}
        & Facebook    & $31.44_{\pm 0.69}$ & +10.9\% \\
        & Amazon      & $86.34_{\pm 0.70}$ & +15.3\% \\
        & Livejournal & $45.00_{\pm 0.25}$ & +7.9\% \\
        & DBLP        & $43.72_{\pm 0.62}$ & +11.4\% \\
        & Twitter     & $21.05_{\pm 0.05}$ & +7.9\% \\
        \bottomrule
    \end{tabular}
\end{table}

\section{Process-level quality analysis}
\label{sec:process_quality_appendix}

This section provides a detailed process-level analysis of the LLM-induced rules used in the Merge and Refine stages. Unlike community-level F1, this analysis directly examines whether individual merge and node-removal decisions are consistent with the ground truth. We compare LLM Rules with the Density and Modularity heuristics on five datasets. All results are reported in percentages and averaged across datasets.

\subsection{Merge quality}

To examine whether each individual merge decision is reasonable, we evaluate every candidate pair $(C_i,C_j)$ considered during the Merge stage. Our first criterion determines whether the two candidates correspond to the same ground-truth community. Specifically, we identify the ground-truth community having the largest overlap with each candidate and label the pair as positive if the two matched communities are identical:
\[
\operatorname{GT}(C)=\arg\max_{G\in\mathcal{G}}|C\cap G|, 
\\
y_{\mathrm{equal}}(C_i,C_j)
=\mathbf{1}\!\left[\operatorname{GT}(C_i)=\operatorname{GT}(C_j)\right],
\]
where $\mathcal{G}$ denotes the set of ground-truth communities. Based on the predicted merge decisions and these labels, we compute Pair Precision, Pair Recall, and Pair F1.

We additionally evaluate whether merging the two candidates preserves sufficient ground-truth consistency. Let $U=C_i\cup C_j$ and $G^*$ be the ground-truth community with the largest overlap with $U$. The pair is labeled as positive when at least a fraction $\tau$ of the merged community belongs to $G^*$:
\[
U=C_i\cup C_j,
G^*=\arg\max_{G\in\mathcal{G}}|U\cap G|,
\\
y_{\tau}(C_i,C_j)
=\mathbf{1}\!\left[\frac{|U\cap G^*|}{|U|}\geq\tau\right].
\]
We evaluate $\tau\in\{0.6,0.7,0.8\}$ using the same pair-level metrics and adopt $\tau=0.8$ as the strict setting, whose Pair F1 is reported as Strict Pair F1.

As shown in Table~\ref{tab:merge_process_quality}, LLM Rules achieve the best results across all four metrics. Compared with the strongest heuristic, they increase Pair F1 from 32.30\% to 39.64\% and Strict Pair F1 from 16.62\% to 23.57\%. 

\begin{table}[htbp]
\centering
\caption{Merge decision quality (\%) averaged across five datasets. Strict Pair F1 uses the union-coverage threshold $\tau=0.8$.}
\label{tab:merge_process_quality}
\small
\resizebox{\columnwidth}{!}{%
\begin{tabular}{lcccc}
\toprule
\textbf{Rule} &
\textbf{Pair Precision} &
\textbf{Pair Recall} &
\textbf{Pair F1} &
\textbf{Strict Pair F1} \\
\midrule
\textbf{LLM Rules} & \textbf{44.48} & \textbf{62.37} & \textbf{39.64} & \textbf{23.57} \\
Modularity & 42.16 & 48.60 & 32.30 & 16.62 \\
Density    & 37.72 & 0.02  & 0.07  & 0.06  \\
\bottomrule
\end{tabular}%
}
\end{table}

\subsection{Refine quality}

In the Refine stage, each candidate community is first matched to the ground-truth community with the highest Jaccard similarity. Nodes outside the matched community are regarded as noisy nodes, whereas the remaining nodes are treated as true members. We evaluate the refinement rules using Deletion Precision and Deletion Recall, which measure the correctness and coverage of noise removal, respectively. We further report True-member Damage, Purity Gain, and Recall Drop to assess the trade-off between removing noise and preserving valid community members.

\begin{table*}[htbp]
\centering
\caption{Refinement quality (\%) averaged across five datasets. Upward and downward arrows indicate whether higher or lower values are preferred.}
\label{tab:refine_process_quality}
\small
\setlength{\tabcolsep}{8pt}
\begin{tabular}{lccccc}
\toprule
\textbf{Rule} &
\textbf{Deletion Precision $\uparrow$} &
\textbf{Deletion Recall $\uparrow$} &
\textbf{True-member Damage $\downarrow$} &
\textbf{Purity Gain $\uparrow$} &
\textbf{Recall Drop $\downarrow$} \\
\midrule
LLM Rules  & 83.11 & 23.18 & 5.02  & 3.27 & 1.66  \\
Modularity & 99.80 & 0.06  & 0.00  & 0.01 & 0.00  \\
Density    & 72.17 & 60.03 & 31.80 & 9.28 & 12.55 \\
\bottomrule
\end{tabular}
\end{table*}

As shown in Table~\ref{tab:refine_process_quality}, Modularity is overly conservative. Although it causes almost no damage to true members, its Deletion Recall is only 0.06\% and its Purity Gain is nearly zero, indicating that it removes little noise. Density exhibits the opposite behavior: it achieves higher Deletion Recall and Purity Gain but removes substantially more true members, resulting in 6.3$\times$ higher True-member Damage and 7.6$\times$ higher Recall Drop than LLM Rules. In comparison, LLM Rules remove a meaningful proportion of noisy nodes while limiting the loss of true members. This result indicates that LLM-induced refinement provides a better balance between community denoising and member preservation, avoiding both the ineffective refinement of Modularity and the excessive pruning of Density.

\section{Efficiency analysis}
\label{sec:efficiency_appendix}
As shown in Table~\ref{tab:efficiency}, \ModelName strikes a balance between computational efficiency and detection performance. Compared to deep learning baselines with convergence difficulties or high time costs, \ModelName maintains better robustness. Notably, on large datasets where deep learning methods fail to converge (ComE and CmtyGAN marked as ``NA'' in Table~\ref{tab:efficiency}), \ModelName successfully processes all datasets. On datasets where comparisons are available, \ModelName achieves significant speedups: 52$\times$ faster than SEAL on Facebook, 472$\times$ faster than ComE on Amazon, and 213$\times$ faster than CmtyGAN on Livejournal.

Among all stages, the merge stage incurs the dominating computational costs, consuming 78.9\%--93.6\% of total runtime across datasets, due to deeper decision tree generation for enhanced interpretability. Meanwhile, the \textit{Refine} stage is highly efficient and only accounts for 3.4\%--18.8\% of total runtime. The \textit{Selection} stage introduces minimal overhead, consuming less than 1.5\% of total runtime. The running times reported in Table~\ref{tab:efficiency} include LLM API call latency but exclude the subsequent evaluation process.

% [Before revision] (No explicit statement on number of LLM calls or large-scale bottleneck.)
\textcolor{revisionpurple}{Across all datasets, \ModelName uses only two calls to the LLM API, one in the \textit{Merge} stage and one in the \textit{Refine} stage, independent of graph size.}
\textcolor{revisionpurple}{The main runtime bottleneck is the \textit{Merge} stage, while token usage remains moderate due to top-$z$/top-$b$ bounded context construction.}
\textcolor{revisionpurple}{We further evaluate \ModelName on \textit{wiki-topcats} (1.79M nodes, 28.5M edges), where total runtime is 6,095.5s with only $\sim$20k LLM tokens. This result indicates that large-graph runtime is dominated by graph-side processing rather than LLM context growth.}
\textcolor{revisionpurple}{To further clarify the cost of Stage 1, Table~\ref{tab:kego_runtime} reports the runtime of $k$-ego initialization across datasets.}

\begin{table}[htbp]
\centering
\caption{\textcolor{revisionpurple}{Runtime of $k$-ego initialization.}}
\label{tab:kego_runtime}
\small
\begin{tabular}{lcc}
\toprule
\textbf{Dataset} & \textbf{$k$} & \textbf{Compute $k$-ego (s)} \\
\midrule
Facebook & \textcolor{revisionpurple}{1} & \textcolor{revisionpurple}{0.05} \\
Amazon & \textcolor{revisionpurple}{2} & \textcolor{revisionpurple}{15.09} \\
DBLP & \textcolor{revisionpurple}{1} & \textcolor{revisionpurple}{0.77} \\
LiveJournal & \textcolor{revisionpurple}{1} & \textcolor{revisionpurple}{0.76} \\
Twitter & \textcolor{revisionpurple}{1} & \textcolor{revisionpurple}{0.88} \\
\bottomrule
\end{tabular}
\end{table}

\begin{table*}[htbp]
\centering
\caption{Efficiency study (RQ2) across five datasets, measured by execution time(s). The values in parentheses represent the execution time of the four stages within our \textsc{\ModelName} framework (local-view community initialization, multi-factor community merge, multi-grain community refinement, global-view community selection), including the time consumed by LLM API calls.} 
\label{tab:efficiency}
\renewcommand{\arraystretch}{1.2} 
\setlength{\tabcolsep}{5pt} 
\begin{tabular}{l | ccc ccc c | c}
\toprule
 \textbf{Dataset} & \textbf{BigClam} & \textbf{ComE} & \textbf{CmtyGAN} & \textbf{Bespoke} & \textbf{SEAL} & \textbf{CLARE} & \textbf{ProCom} & \textbf{\ModelName (Ours)} \\ 
\midrule
Facebook    & 2  & 611         & 1,620         & 4   & 10,440 & 135 & 10  & 201 (2/178/20/1)\\
Amazon      & 2  & 62,760    & 23,460   & 207 & 4,080 & 118 & 206 & 133 (2/105/25/1)\\
Livejournal & 76 & NA          & 271,080        & 67  & 14,580 & 784 & 145 & 1,275 (22/1,194/43/14)\\
DBLP        & 34 & NA   & NA      & 730 & 2,520    & 400 & 305 & 299 (18/242/35/4)\\
Twitter     & 46 & 273,120         & NA   & 538 & 2,280    & 1,920  & 212 & 1,317 (27/1,144/128/18)\\
\bottomrule
\end{tabular}
\end{table*}

\section{Token cost of five datasets}
\label{sec:token_cost_appendix}
LUCID involves two API calls that occur in the Merge and Refine stages, respectively.
In the \textbf{Merge} stage, the input incorporates the dataset description and sampled $k$-ego networks with detailed structural context. Consequently, input token consumption is relatively high and varies with graph density. The LLM output consists of a structured decision tree and an executable Python function for merging.
In the \textbf{Refine} stage, the input comprises the dataset description and a small batch of candidate communities. The LLM generates a node-level refinement function as output to optimize community boundaries. Due to the compact context, token usage in this stage remains low and consistent across datasets.
By limiting the interaction to these two specific API calls, LUCID ensures low token costs, as detailed in Table \ref{tab:token_cost}.
% \begin{table}[htbp]
% \centering
% \caption{Token usage and cost across different datasets. Costs are estimated based on OpenAI GPT-5.2 standard pricing (\$1.75/1M input tokens and \$14.00/1M output tokens).}
% \label{tab:token_cost}
% \resizebox{\linewidth}{!}{%
% \begin{tabular}{lccc}
% \toprule
% \textbf{Dataset} & \textbf{Merge Stage Tokens} & \textbf{Refine Stage Tokens} & \textbf{Est. Cost (\$)} \\
% & (Input / Output) & (Input / Output) & (Merge / Refine / Total) \\
% \midrule
% Amazon & $\sim$12k / 5k & $\sim$1k / 2k & $\sim$0.09 / 0.03 / 0.12 \\
% Facebook & $\sim$90k / 6k & $\sim$1k / 2k & $\sim$0.24 / 0.03 / 0.27 \\
% DBLP & $\sim$85k / 5k & $\sim$1k / 2k & $\sim$0.22 / 0.03 / 0.25 \\
% LiveJournal & $\sim$50k / 5k & $\sim$1k / 2k & $\sim$0.16 / 0.03 / 0.19 \\
% Twitter & $\sim$120k / 6k & $\sim$1k / 2k & $\sim$0.29 / 0.03 / 0.32 \\
% \bottomrule
% \end{tabular}%
% }
% \vspace{6pt}
% \end{table}
\begin{table*}[h!]
\centering
\caption{Token usage and cost across different datasets. Costs are estimated based on OpenAI GPT-5.2 standard pricing (\$1.75/1M input tokens and \$14.00/1M output tokens).}
\label{tab:token_cost}
\small
\setlength{\tabcolsep}{7pt}
\begin{tabular}{lccc}
\toprule
\textbf{Dataset} & \textbf{Merge Stage Tokens} & \textbf{Refine Stage Tokens} & \textbf{Est. Cost (\$)} \\
& (Input / Output) & (Input / Output) & (Merge / Refine / Total) \\
\midrule
Amazon & $\sim$12k / 5k & $\sim$1k / 2k & $\sim$0.09 / 0.03 / 0.12 \\
Facebook & $\sim$90k / 6k & $\sim$1k / 2k & $\sim$0.24 / 0.03 / 0.27 \\
DBLP & $\sim$85k / 5k & $\sim$1k / 2k & $\sim$0.22 / 0.03 / 0.25 \\
LiveJournal & $\sim$50k / 5k & $\sim$1k / 2k & $\sim$0.16 / 0.03 / 0.19 \\
Twitter & $\sim$120k / 6k & $\sim$1k / 2k & $\sim$0.29 / 0.03 / 0.32 \\
\bottomrule
\end{tabular}%
\vspace{6pt}
\end{table*}

\section{The stability of \ModelName}
\label{sec:stability_appendix}
% [Before revision] (No dedicated section for randomness / robustness.)
\textcolor{revisionpurple}{Unlike the fixed-prompt trials in the main experiments, here we vary the sampling seed and report mean$\pm$std over three runs.}
\textcolor{revisionpurple}{The result indicates moderate variance across sampling seeds, while maintaining competitive performance.}

\begin{table}[htbp]
\centering
\caption{\textcolor{revisionpurple}{Stability under sampling-seed variance (3 seeds).}}
\label{tab:stability_sampling_seed}
\small
\setlength{\tabcolsep}{6pt}
\begin{tabular}{lcc}
\toprule
\textbf{Dataset} & \textbf{F1 (mean$\pm$std)} & \textbf{Jaccard (mean$\pm$std)} \\
\midrule
Facebook & \textcolor{revisionpurple}{$38.52 \pm 0.97$} & \textcolor{revisionpurple}{$28.50 \pm 0.90$} \\
Amazon & \textcolor{revisionpurple}{$91.30 \pm 1.45$} & \textcolor{revisionpurple}{$86.03 \pm 2.34$} \\
LiveJournal & \textcolor{revisionpurple}{$54.34 \pm 1.86$} & \textcolor{revisionpurple}{$46.80 \pm 1.82$} \\
\bottomrule
\end{tabular}
\end{table}

\section{Robustness to large-community fragmentation}
\label{sec:large_community_fragmentation}
\textcolor{revisionpurple}{To assess whether small-$k$ ego decomposition fragments large communities, we isolate ground-truth communities with diameter no smaller than 3. As shown in Table~\ref{tab:large_community_kego}, performance on these communities remains comparable to full-dataset performance.}

\begin{table}[h!]
\centering
\caption{\textcolor{revisionpurple}{Large-community analysis under small $k$.}}
\label{tab:large_community_kego}
\small
\begin{tabular}{lcccc}
\toprule
\textbf{Dataset} & \textbf{GT($d\geq3$)/Total} & \textbf{$k$} & \textbf{F1($d\geq3$)} & \textbf{F1(full)} \\
\midrule
Twitter & \textcolor{revisionpurple}{14.20\%} & \textcolor{revisionpurple}{1} & \textcolor{revisionpurple}{34.90} & \textcolor{revisionpurple}{32.51} \\
Facebook & \textcolor{revisionpurple}{21.54\%} & \textcolor{revisionpurple}{1} & \textcolor{revisionpurple}{38.86} & \textcolor{revisionpurple}{40.82} \\
\bottomrule
\end{tabular}
\end{table}

\section{Robustness to node-role label}
\textcolor{revisionpurple}{We also evaluate robustness to node-role noise by randomly perturbing 25\%, 50\%, and 75\% of role assignments. As shown in Table~\ref{tab:role_noise_robustness}, performance remains stable because node roles serve as auxiliary structural signals rather than hard constraints.}
\begin{table}[t]
    \centering
    \caption{Performance under different noise ratios with F1.}
    \label{tab:noise_ratio}
    \begin{tabular}{lcccc}
        \toprule
        \textbf{Dataset} & \textbf{0\%} & \textbf{25\%} &
        \textbf{50\%} & \textbf{75\%} \\
        \midrule
        Facebook    & 40.95 & 40.05 & 41.06 & 40.51 \\
        Twitter     & 33.03 & 28.81 & 32.00 & 32.76 \\
        LiveJournal & 56.57 & 50.04 & 47.55 & 52.11 \\
        \bottomrule
    \end{tabular}
\end{table}

\section{Hyperparameter sensitivity analysis}
\label{sec:hyper_appendix}
% \begin{table}[t!]
% \centering
% \caption{Default hyperparameter settings used in \ModelName.}
% \label{tab:default_hyperparameters}
% \small
% \begin{tabular}{lc}
% \toprule
% \textbf{Symbol} & \textbf{Value} \\
% \midrule
% $\mathcal{M}_{\mathrm{LLM}}$ & GPT-5.2 \\
% $[d_{\min}, d_{\max}]$ & $[11, 13]$ \\
% $s_m$ & $3$ \\
% $z$ & $3$ \\
% $b$ & $3$ \\
% $s_r$ & $10$ \\
% $\epsilon$ & $0.2$ \\
% \bottomrule
% \end{tabular}
% \end{table}

This section provides detailed sensitivity analysis for all hyperparameters evaluated in our framework. We systematically examine the impact of different hyperparameter settings on model performance across multiple datasets.

\subsection{LLM sensitivity}
As shown in Figure~\ref{fig:llm_comparison}, \ModelName maintains stable performance across three different LLMs, with overall performance differences generally within 1.5\% across four mainstream datasets, indicating good adaptability to different LLM architectures. On datasets with clearer structural patterns (Facebook and Amazon), DeepSeek-V4-Pro achieves slightly better results, while GPT-5.2 demonstrates superior performance on more challenging datasets (Livejournal and DBLP). However, DeepSeek-V4-Pro and Qwen3-Max exhibit limited context windows and output instability, often failing to generate fixed-format responses; on the Twitter dataset, these models frequently exceed context window limits under default settings. Therefore, we select GPT-5.2 as our default LLM for its superior stability and larger context capacity.
% [Before revision] (Twitter-specific cross-model results were not explicitly reported.)
\textcolor{revisionpurple}{On Twitter, GPT-5.2, Qwen3-Max, and DeepSeek-V4-Pro achieve F1 scores of 0.320, 0.308, and 0.320, respectively, as shown in Table~\ref{tab:llm_generalization_twitter}. These results are consistent with the sensitivity analysis and indicate stable cross-model behavior under bounded-context prompting.}

% \begin{table}[htbp]
% \centering
% \caption{\textcolor{revisionpurple}{Twitter performance across LLM backbones (F1).}}
% \label{tab:llm_generalization_twitter}
% \small
% \begin{tabular}{lc}
% \toprule
% \textbf{LLM} & \textbf{F1} \\
% \midrule
% GPT-5.2 & \textcolor{revisionpurple}{0.320} \\
% Qwen3-Max & \textcolor{revisionpurple}{0.308} \\
% DeepSeek-V3.2 & \textcolor{revisionpurple}{0.320} \\
% \bottomrule
% \end{tabular}
% \end{table}

\subsection{Tree depth sensitivity}
Figure~\ref{fig:sensitivity} shows that performance variations across different depth settings are marginal, indicating that the framework is robust to tree depth choices. We observe that deeper trees (11--13) capture more structural information and provide richer interpretability, while maintaining comparable performance. Therefore, we adopt the depth range of 11--13 as our default setting. Using a range rather than a fixed value allows the LLM flexibility in generating decision trees, as the LLM

may deviate from exact specifications even when given fixed depth constraints; providing an acceptable range encourages more consistent adherence to the depth requirements.

\subsection{Merge sample size sensitivity}
We analyze two sampling hyperparameters: the number of sampled centers and the top-$z$ neighbors per center. As shown in Figure~\ref{fig:sensitivity}, \ModelName exhibits low sensitivity to both parameters. For the number of sampled centers, values of 3 and 4 yield comparable performance; we select 3 as the default as it represents a stable case that maintains prompt length within the LLM's context window, effectively avoiding \textit{NA} issues on Livejournal dataset. Similarly, using 3 neighbors per center provides stable performance across datasets, with minimal fluctuations. Increasing either parameter occasionally yields slight improvements but risks exceeding context limits. Therefore, we adopt 3 as the default value for both parameters to ensure stable performance while maintaining computational efficiency.

\subsection{Top-$b$ neighbors for merge sensitivity}
The parameter $b$ controls the number of candidate neighbors considered for each center node during the merge process. 
We start with $b=3$, the smallest value with meaning merging decisions. 
As shown in Figure~\ref{fig:sensitivity}, performance remains relatively stable across different $b$ values, with $b=3$ achieving the best results on most datasets. 
As $b$ increases, performance gradually degrades, particularly on Livejournal and DBLP. This implies that considering too many neighbors may introduce noise or weakly related candidates. 
We attribute this optimal performance at $b=3$ to the alignment between decision tree construction and execution stages: during prompt construction, we sample top-3 neighbors for LLM decision tree generation, and the LLM learns rules specifically tailored for this three-neighbor structure. Therefore, executing merge decisions with $b=3$ matches the learned patterns and leads to superior performance. Based on these observations, we select $b=3$ as the default value.

\subsection{Refine sample size sensitivity}
Figure~\ref{fig:sensitivity} shows that performance is robust to changes in refine sample size across different datasets. While Facebook and Livejournal show slight improvements with more samples, Amazon reaches its peak at size 5, suggesting that extra samples may introduce redundancy. Given the overall stable performance across different sample sizes, we select 10 as the default value.

\subsection{$\epsilon$ (penalty coefficient) sensitivity}
As shown in Figure~\ref{fig:sensitivity}, performance remains stable when $\epsilon$ varies within 0.1–0.25 across different datasets. 
Values of $\epsilon = 0.15$ and $0.2$ achieve consistently better or comparable results, suggesting that a moderate penalty on chain-like communities is beneficial. Given the overall stable performance across this range, we select $\epsilon = 0.2$ as the default value.
\textcolor{revisionpurple}{For new graphs, we recommend $\epsilon=0.2$ by default, with smaller values for sparse graphs and larger values for dense or noisy graphs.}

\section{Prompt details}
\label{prompt details}
This section presents the prompts used in the Merge and Refine stages. Dataset descriptions and sampled communities are inserted into the corresponding placeholders at runtime.

\begin{tcolorbox}[
    enhanced,
    breakable,
    sharp corners,
    boxrule=0.5pt,
    colback=white,
    colframe=black,
    label={box:prompt_spec},
    title=\textbf{Box 1: Merge-Stage Prompt},
    fonttitle=\bfseries\sffamily,
    coltitle=black,
    attach boxed title to top left={yshift=-2mm, xshift=2mm},
    boxed title style={colback=white, colframe=white},
    drop shadow
]
\small
You are an expert in graph community detection and structural reasoning. Given a dataset description and representative $k$-ego samples, derive conservative and interpretable rules for merging neighboring $k$-ego networks.

\textbf{Input:}
\begin{itemize}
    \item \textit{Dataset description}: global graph statistics and expected community characteristics.
    \item \textit{$k$-ego samples}: a center $k$-ego network and the $k$-ego networks of its direct neighbors, represented by node IDs, node labels, and either node degrees or adjacency lists.
\end{itemize}

\textbf{Task:}
\begin{itemize}
    \item Analyze all samples jointly and construct a decision tree with 8--12 rule nodes. For each center--neighbor pair, decide whether the neighbor network should be merged with the center network.
    \item Define each rule using interpretable structural and semantic evidence, such as node overlap, connectivity, density or compactness, size balance, label agreement, and cohesion after merging. Adapt thresholds to the dataset description and favor precision over recall.
    \item Each rule node must specify a unique ID, a concise metric name, executable Python-style condition code, and its true and false successors. The tree must terminate at either \texttt{merge} or \texttt{no\_merge}.
    \item Convert the learned tree into an equivalent executable function that evaluates all neighboring $k$-ego networks, reuses shared computations, and returns the union of the center network and all neighbors selected for merging.
\end{itemize}

\textbf{Output:} Return only one valid JSON object containing (1) the merge decision tree, (2) several representative decision-path traces, and (3) the executable merge function. The function must be concise, CPU-compatible, and directly executable for the specified input representation.

\textbf{Output schema:}
\begin{lstlisting}[basicstyle=\ttfamily\scriptsize,breaklines=true,columns=fullflexible,showstringspaces=false]
{
  "merge_decision_tree": {
    "nodes": [
      {
        "node_id": "root",
        "kind": "rule",
        "name": "<metric_name>",
        "code": "def rule_fn(meta): ...",
        "true_child": "<next_node>",
        "false_child": "<next_node_or_leaf>"
      },
      {
        "node_id": "leaf_merge",
        "kind": "leaf",
        "action": "merge"
      }
    ],
    "sample_decisions": [
      {
        "sample_id": 0,
        "neighbor_id": "<node_id>",
        "merge": true,
        "path": ["root", "...", "leaf_merge"],
        "merged_nodes": ["<node_ids>"],
        "reason": "<decisive_factors>"
      }
    ]
  },
  "merge_execution_function": {
    "code_degree": "def merge_k_ego_networks(...): ...",
    "code_adjacency_list": "def merge_k_ego_networks(...): ..."
  }
}
\end{lstlisting}
\end{tcolorbox}
\begin{tcolorbox}[
    enhanced,
    sharp corners,
    boxrule=0.5pt,
    colback=white,
    colframe=black,
    label={box:prompt_spec2},
    title=\textbf{Box 2: Refine-Stage Prompt},
    fonttitle=\bfseries\sffamily,
    coltitle=black,
    attach boxed title to top left={yshift=-2mm, xshift=2mm},
    boxed title style={colback=white, colframe=white},
    drop shadow
]
\small
You are an expert in graph community detection. Candidate communities preserve most of the true community structure but may contain noisy boundary nodes. Given a dataset description and representative candidate communities, generate a scalable rule-based procedure for removing such nodes.

\textbf{Input:}
\begin{itemize}
    \item \textit{Dataset description}: global graph and community characteristics.
    \item \textit{Candidate-community samples}: node IDs, either an induced adjacency list or node degrees, and optional node labels. These samples guide rule induction; the resulting procedure is applied to the full collection.
\end{itemize}

\textbf{Task:}
\begin{itemize}
    \item Derive dataset-adaptive refinement criteria by jointly considering structural signals (e.g., degree, leaves, and low-core nodes), topological support (e.g., weak triadic closure), and, when available, label consistency with neighboring or majority nodes.
    \item Generate one batch refinement function that identifies nodes inconsistent with each candidate community while preserving coherent community cores.
    \item Ensure that the function handles either input representation, optional labels, missing fields, and empty communities. It must run on CPU using Python and NumPy with $O(n)$--$O(n\log n)$ complexity and scale to tens of thousands of communities.
\end{itemize}

\textbf{Output:} Return only the directly executable function \texttt{batch\_refine\_communities}. Its return value must be a mapping from each \texttt{community\_index} to the corresponding \texttt{list\_of\_nodes\_to\_drop}. 
\end{tcolorbox}

\section{Algorithm details}
\label{sec:algorithm detals}
In this section, we show the detailed algorithm processes of merge stage and entire \ModelName in Algorithms \ref{alg:merge} and \ref{alg:lucid}, respectively.

\begin{algorithm}[H] 
\caption{Merge Phase Algorithm}
\label{alg:merge}
\begin{algorithmic}[1]
\renewcommand{\algorithmicrequire}{\textbf{Input:}}
\renewcommand{\algorithmicensure}{\textbf{Output:}}

\Require Graph $G = (V, E)$, Input context $\mathcal{I} = \{\mathcal{I}_P, \mathcal{I}_B\}$ (where $\mathcal{I}_B$ contains top-$z$ neighbors for each sample center), sampling parameters $z$ and $b$
\Ensure Candidate communities $\mathcal{C_{M}}$

\State \codecmt{Pre-compute ego networks for all nodes.}
\State \textbf{Pre-compute} $V_{\text{ego}}(v), E_{\text{ego}}(v)$ for all nodes $v \in V$

\State \codecmt{Multi-factor Decision Tree Construction.}
\State \textbf{Generate} decision tree $\mathcal{T}$ via LLM using context $\mathcal{I}$
\State \textbf{Initialize:} $\mathcal{C_M} \gets \{V_{\text{ego}}(v) \mid v \in V\}$, $\mathcal{A} \gets V$

\State \codecmt{Dynamic Merge Orchestration.}
\State \textbf{Compute} $\text{EgoDensity}(v) = \frac{2|E_{\text{ego}}(v)|}{|V_{\text{ego}}(v)|(|V_{\text{ego}}(v)|-1)}$ for all $v \in \mathcal{A}$
\State \textbf{Sort} $\mathcal{A}$ by $\text{EgoDensity}$ in descending order
\For{each center node $v \in \mathcal{A}$}
    \State \textbf{Compute} Jaccard similarity $J(v,u) = \frac{|V_{\text{ego}}(v) \cap V_{\text{ego}}(u)|}{|V_{\text{ego}}(v) \cup V_{\text{ego}}(u)|}$ for all $u \in \mathcal{N}(v)$
    \State \textbf{Select} top-$b$ neighbors $\mathcal{N}_b(v)$ from $\{u \in \mathcal{N}(v) \mid u \in \mathcal{A}\}$ ranked by $J(v,u)$
    \For{each neighbor $u \in \mathcal{N}_b(v)$}
        \State \textbf{Evaluate} merge decision $M(v,u) \gets \mathcal{T}$
        \If{$M(v,u) = 1$}
            \State \textbf{Merge}: $\mathcal{C}_v \gets \mathcal{C}_v \cup \{u\}$
            \State \textbf{Update} available set: $\mathcal{A} \gets \mathcal{A} \setminus V_{\text{ego}}(u)$
        \EndIf
    \EndFor
\EndFor

\State \textbf{Return} $\mathcal{C_{M}}$
\end{algorithmic}
\end{algorithm}

Algorithm~\ref{alg:merge} details the merge stage process. The complete \ModelName framework algorithm is presented in Algorithm~\ref{alg:lucid}.

\begin{algorithm}[H] 
\caption{The LUCID Framework}
\label{alg:lucid}
\begin{algorithmic}[1]
\renewcommand{\algorithmicrequire}{\textbf{Input:}}
\renewcommand{\algorithmicensure}{\textbf{Output:}}

\Require Graph $G = (V, E)$, merge sample number $s_m$, sampling parameters $z$ and $b$, tree depth range $[d_{\min}, d_{\max}]$, refine sample size $s_r$, penalty coefficient $\epsilon$, top-$N$ selection parameter
\Ensure Final communities $\mathcal{C}$

\State \codecmt{Stage 1: Local-view Community Initialization.}
\For{each node $u \in V$}
    \State \textbf{Compute} Jaccard similarity $J(u,v)$ for all $v \in \mathcal{N}(u)$
    \State \textbf{Extract} feature vector $\mathbf{x}_u$ from percentiles of $\{J(u,v)\}_{v \in \mathcal{N}(u)}$
\EndFor
\State \textbf{Assign} role labels $\{l_u\}_{u \in V}$ via K-means clustering ($K=4$) on $\{\mathbf{x}_u\}_{u \in V}$
\State \textbf{Decompose} graph into $k$-ego networks: $\{G_u^{(k)}\}_{u \in V}$

\State \codecmt{Stage 2: Multi-factor Community Merge.}
\State \textbf{Prepare} merge input context: select $s_m$ representative center nodes $\{v_c^{(i)}\}_{i=1}^{s_m}$, compute Jaccard similarity $J(v_c^{(i)},u)$ for all $u \in \mathcal{N}(v_c^{(i)})$, and select top-$z$ neighbors $\mathcal{N}_z(v_c^{(i)})$ for each center to form sample blocks $\{\mathcal{I}_B^{(i)}\}_{i=1}^{s_m}$
\State \textbf{Construct} input context $\mathcal{I} = \{\mathcal{I}_P, \{\mathcal{I}_B^{(i)}\}_{i=1}^{s_m}\}$ with tree depth range $[d_{\min}, d_{\max}]$
\State \codecmt{Call Merge Algorithm (Algorithm~\ref{alg:merge}).}
\State \codecmt{Input: $G$, $\mathcal{I}$, $b$; Output: $\mathcal{C_M}$}
\State $\mathcal{C_M} \gets$ \textsc{Merge}($G$, $\mathcal{I}$, $b$)

\State \codecmt{Stage 3: Multi-grain Community Refinement.}
\State \textbf{Sample} $s_r$ candidate communities $\{\mathcal{C}_j\}_{j=1}^{s_r}$ from $\mathcal{C_M}$ as examples
\State \textbf{Generate} coarse-to-fine rule set $\mathcal{R}$ via LLM using context from sampled communities $\{\mathcal{C}_j\}_{j=1}^{s_r}$
\State \textbf{Initialize:} $\mathcal{C_R} \gets \emptyset$
\For{each candidate community $\mathcal{C} \in \mathcal{C_M}$}
    \State \textbf{Initialize:} $\mathcal{C}' \gets \mathcal{C}$
    \For{each node $w \in \mathcal{C}$}
        \State \textbf{Apply} rules in $\mathcal{R}$ progressively from coarse to fine
        \If{node $w$ violates refinement rules}
            \State \textbf{Remove}: $\mathcal{C}' \gets \mathcal{C}' \setminus \{w\}$
        \EndIf
    \EndFor
    \If{$|\mathcal{C}'| \geq 2$}
        \State \textbf{Add}: $\mathcal{C_R} \gets \mathcal{C_R} \cup \{\mathcal{C}'\}$
    \EndIf
\EndFor

\State \codecmt{Stage 4: Global-view Community Selection.}
\For{each community $\mathcal{C} \in \mathcal{C_R}$}
    \State \textbf{Compute} $RDC(\mathcal{C}) = \frac{Cut(\mathcal{C}, \bar{\mathcal{C}})}{|E(\mathcal{C})| - \epsilon(|\mathcal{C}| - 1)}$
\EndFor
\State \textbf{Sort} $\mathcal{C_R}$ by $RDC$ scores in ascending order
\State \textbf{Select} top-$N$ communities: $\mathcal{C} \gets \{\mathcal{C}_1, \mathcal{C}_2, \ldots, \mathcal{C}_N\}$ from sorted $\mathcal{C_R}$

\State \textbf{Return} $\mathcal{C}$
\end{algorithmic}
\end{algorithm}

\end{document}